\newcites{app}{Appendix References}
\providecommand{\customgenericname}{}
\newcommand{\newcustomtheorem}[2]{%
  \newenvironment{#1}[1]
  {%
   \renewcommand\customgenericname{#2}%
   \renewcommand\theinnercustomgeneric{##1}%
   \innercustomgeneric
  }
  {\endinnercustomgeneric}
}
\definecolor{Gray}{gray}{0.9}
\newtheorem{theorem}{Theorem}
\newtheorem{lemma}{Lemma}
\theoremstyle{definition}
\title{Latent Processes Identification From Multi-View Time Series}
\author{
Zenan Huang$^{1,4}$
\and
Haobo Wang$^4$\and
Junbo Zhao$^4$\And
Nenggan Zheng\footnote{Corresponding author.}$^{1,2,3,4}$
\affiliations
$^1$Qiushi Academy for Advanced Studies (QAAS), Zhejiang University\\
$^2$the State Key Lab of Brain-Machine Intelligence, Zhejiang University\\
$^3$CCAI by MOE and Zhejiang Provincial Government (ZJU)\\
$^4$College of Computer Science and Technology, Zhejiang University\\
\emails
\{lccurious, wanghaobo, j.zhao, zng\}@zju.edu.cn,
}
\begin{document}

\maketitle

\begin{abstract}
Understanding the dynamics of time series data typically requires identifying the unique latent factors for data generation, \textit{a.k.a.}, latent processes identification. Driven by the independent assumption, existing works have made great progress in handling single-view data. However, it is a non-trivial problem that extends them to multi-view time series data because of two main challenges: (i) the complex data structure, such as temporal dependency, can result in violation of the independent assumption; (ii) the factors from different views are generally overlapped and are hard to be aggregated to a complete set. In this work, we propose a novel framework MuLTI that employs the contrastive learning technique to invert the data generative process for enhanced identifiability. Additionally, MuLTI integrates a permutation mechanism that merges corresponding overlapped variables by the establishment of an optimal transport formula. Extensive experimental results on synthetic and real-world datasets demonstrate the superiority of our method in recovering identifiable latent variables on multi-view time series.
\end{abstract}

\section{Introduction}
Detecting causal relationships from time series based on observations is a challenging problem in many fields of science and engineering \cite{spirtesCausationPredictionSearch2000}.
A thorough grasp of causal relationships, interaction pathways, and time lags is valuable for interpreting and modeling temporal processes \cite{pearlCausalityModelsReasoning2000}.
Existing works \cite{chickeringOptimalStructureIdentification2002,tsamardinosMaxminHillclimbingBayesian2006,zhangCompletenessOrientationRules2008,hoyerNonlinearCausalDiscovery2009,zhengDAGsNOTEARS2018} typically rely on predefined variables. 
However, such a strategy is not directly applicable to real-world scenarios, where data are intertwined with unknown generation processes, and causal variables are not readily available.
Therefore, identifying the latent sources is crucial for interpreting underlying causal relations and elucidating the genuine dynamics inherent in the temporal data.

\begin{figure}[t]
    \centering
    \includegraphics[width=0.8\linewidth]{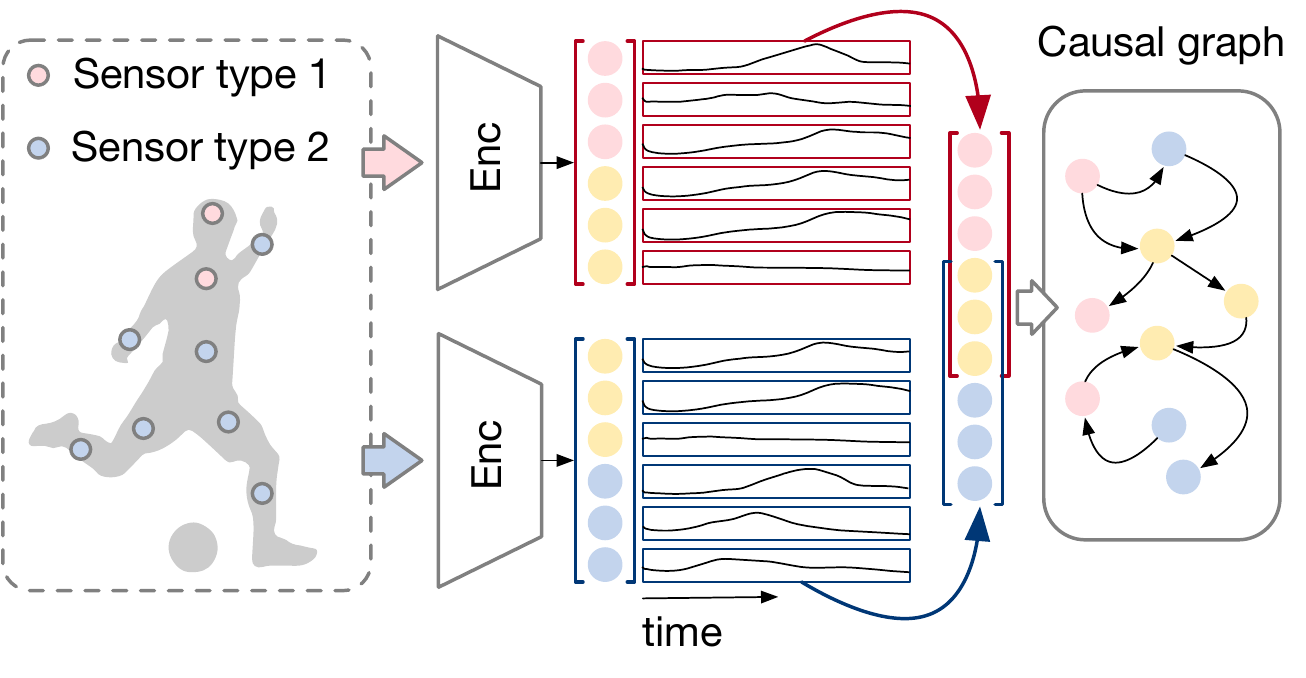}
    \caption{The diagram depicts a pipeline analyzing multi-view physiological time-series data. This pipeline learns temporal embeddings from both views, aligns variables considering their dependencies, and effectively reveals underlying variables and relationships.}
    \label{fig:motivation}
\end{figure}

To cope with this problem, non-linear independent component analysis (nICA) \cite{hyvarinenNonlinearIndependentComponent1999} has shown promising results in terms of identifiability, by effectively exploiting the underlying structure of the data. 
On time series data, most nICA techniques attempt to utilize the temporal structure among sequential observations, such as temporal contrastive learning \cite{DBLP:conf/nips/HyvarinenM16}, permutation contrastive learning \cite{hyvarinenNonlinearICATemporally2017}, and generalized contrastive learning \cite{hyvarinenNonlinearICAUsing2019}. 
However, these methods inherently assume the latent components are independent, which may not hold in practice.
Moreover, in broader real-world scenarios, causal discovery tasks mostly require identifying dependent relations from extensive data collections, which may further restrict the identifiability of nICA techniques.
Take Figure~\ref{fig:motivation} as an example, understanding the physiological processes of human actions requires jointly analyzing the data across multiple kinds of sensors and a physiological signal may influence multiple downstream regions with different time lags.
Similar cases also exist in a wide variety of real-world scenarios, such as multi-view sensors information fusion \cite{heCSMVCMultiviewMethod2021}, multi-market stock index analysis \cite{wangIncorporatingExpertBasedInvestment2020}, \textit{etc}.
Hitherto, few efforts have been made to address such challenging yet realistic problems.

The example above, which we call the multi-view latent process identification (MVLPI) problem, poses two combinatorial challenges.
First, the diverse time delays in interactions among variables in time series data complicate direct estimation of latent variables.
Second, different views typically correlate with subsets of latent factors.
Due to identifiability issues, merely inverting these views does not guarantee alignment of the recovered factors.
Hence, aggregating them to the final complete latent variables is non-trivial.

In this paper, we propose the \textbf{Mu}lti-view \textbf{L}aten\textbf{T} Processes \textbf{I}dentification (dubbed \textbf{MuLTI}) framework that learns identifiable causal related variables from the multi-view data.
MuLTI identifies latent variables from multi-view observations using three strategies:
1) It reformulates causally related process distributions using conditional independence, replacing original latent variables with independent causal process noises;
2) It applies contrastive learning to maximize mutual information between prior and posterior conditional distributions;
3) It aligns partially overlapping latent variables using learnable permutation matrices, optimized with Sinkhorn method.
The first two strategies ensure the learning of identifiable latent variables, while the last one bridges the causal relations across multiple views.
Given that only a subset of each view's source components correspond, matching and merging shared latent variables resemble the sorting and alignment of top-k shared components among estimated variables.

We evaluate our method on both synthetic and real-world data, spanning multivariable time series and visual tasks.
Experiment results show that the latent variables are reliably identified from observational multi-view data.
To the best of our knowledge, the learning of causal dependent latent variables from multi-view data has no prior solution.
The proposed framework may also serve as a factor analyzing tool for extracting reliable features for downstream tasks of multi-view learning.
More theoretical and empirical results can be found in Appendix.

\section{Methodology}
\subsection{Problem Formulation}

In contrast to the single-view setup, the MVLPI problem presents a unique challenge: only a portion of the latent factors can be recovered from an individual view.
Thus, it is necessary to revisit the data-generating procedure of the MVLPI task.
As shown in Figure~\ref{fig:motivation-assumption}, we consider the MVLPI task on the time series data, with the goal of recovering the latent factor $\bm{z}_{t}\in \mathbb{R}^{d}$ at each time step $t$, which uniquely generates the observed views. To achieve this, we make a common assumption that the current state is spatiotemporally dependent on the historical states,
\begin{equation}
\bm{z}_{i,t}=f_{i}(\textbf{Pa}(z_{i,t}), \epsilon_{i,t}),
\label{eq:spatial-temporal}
\end{equation}
where $\textbf{Pa}(z_{i,t})$ denotes causal parents of $i$-th factor at current step $t$, $\epsilon_{i,t}$ denotes a noise component of causal transition.
While $\bm{z}_t$ contains a complete set of latent factors that we are truly interested in, we note that each observed view may be dependent on only a subset of it. 
Formally, we have the following data-generating process,
\begin{equation}
    \bm{x}^{1}_{t}  = g^{1}(\bm{z}^{1}_{t}), \quad 
    \bm{x}^{2}_{t}  = g^{2}(\bm{z}^{2}_{t}), \quad 
    \text{where}~~\bm{z}_{t} = \bm{z}^{1}_{t}\cup \bm{z}^{2}_{t},
\end{equation}
where $\bm{x}^{v}_{t}$ is the $v$-th observed view at time step $t$, and $g^{v}(\cdot)$ is the corresponding map function.
Note that here we consider a 2-view setup for the sake of brevity, while our framework can be easily generalized to handle more views. 
\begin{figure}[t]
    \centering
    \includegraphics[width=0.9\linewidth]{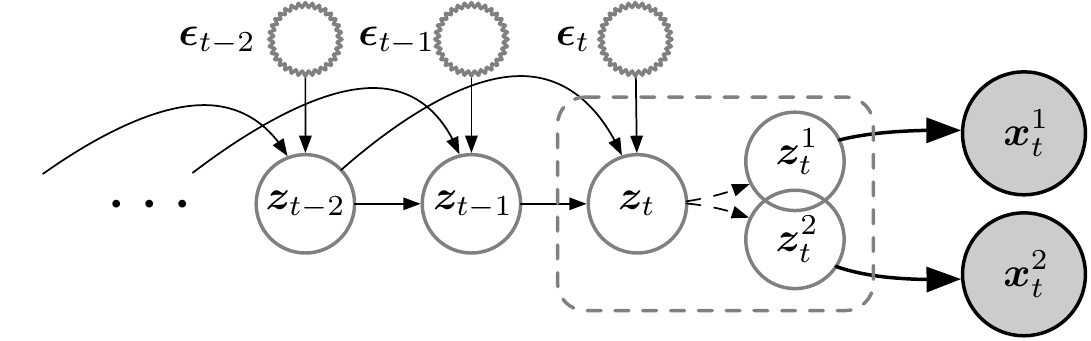}
    \caption{Graphical model of data generation. Each view may be generated by part of latent variables individually.}
    \label{fig:motivation-assumption}
\end{figure}

Following the above definition, we can identify the factors from the observed views, and aggregate them into the complete version $\bm{z}_{t}$.
However, as each view performs a distinctive nonlinear mixing of the source factors, it is challenging to identify the corresponding latent variables entangled in views, or to ensure the factors from different views are in the same subspace.
Second, though we slightly abuse the notation `$\cup$' to show the \textit{aggregation} relation of view-specific and the complete factors, it is non-trivial to reconstruct the complete $\bm{z}_{t}$ from a set of indeterminate estimations in practice.
To remedy this problem, we introduce our novel \textbf{MuLTI} framework which comprises a contrastive learning module for enhanced identifiability alongside a novel merge operator that aggregates the view-specific latent variables to a complete one.
The whole procedure is illustrated in Figure~\ref{fig:framework}.

\subsection{Contrastive Learning Module}
\label{sec:contrastive_learning}
\begin{figure*}
    \centering
    \includegraphics[width=0.9\linewidth]{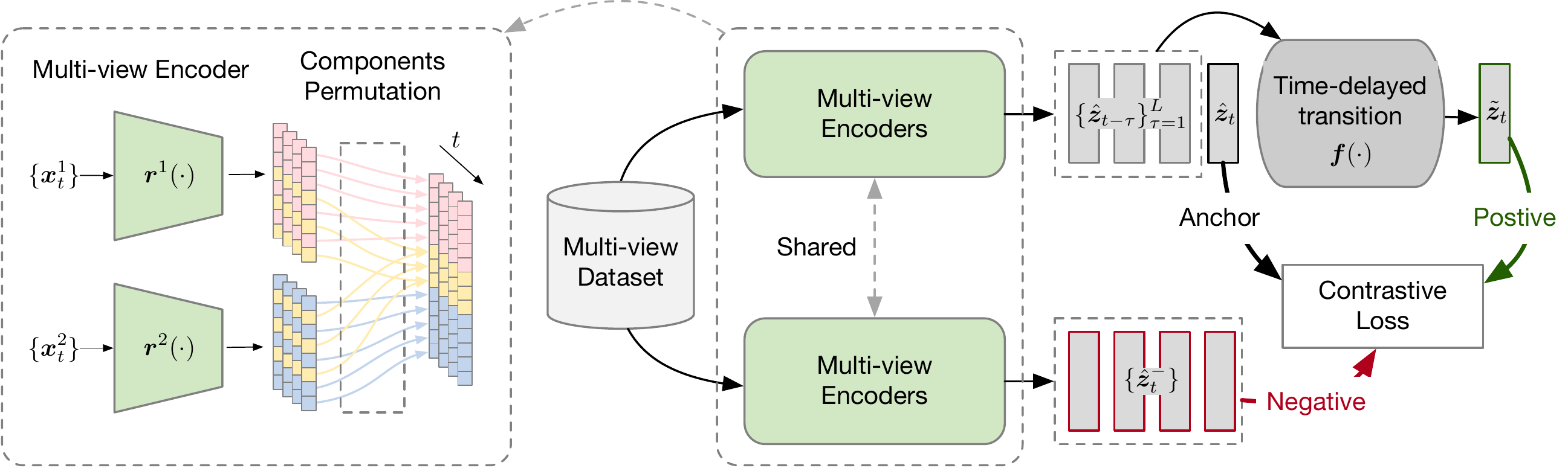}
    \caption{Illustration of our MuLTI framework. 
    On one hand, we recover the view-specific latent factors $\hat{\bm{z}}_t^v$ from individual views, which are then merged to obtain $\hat{\bm{z}}_t$. On the other hand, we exploit the temporal dependency to obtain a causal transited latent factor $\tilde{\bm{z}}_t$ from previously estimated $\hat{\bm{z}}_t$. Thereafter, we regard them as positive pairs to optimize the contrastive loss, which serves as a surrogate of mutual information maximization to achieve identifiability.
    }
    \label{fig:framework}
\end{figure*}

Our goal is to learn an inverse function $r(\cdot)$ from the observed data.
This is achieved by: (1) individually inverting functions $\{r^{v}(\cdot)\}$; (2) a merge function $m:\{\mathcal{Z}^{v}\}\mapsto \mathcal{Z}$, in a way that $r(\cdot)=m(\{r^{v}(\bm{x}^{v})\})$.
Note that each $r^{v}(\cdot)$ corresponds to an inverse of $g^{v}(\cdot)$, and the reverse function $r(\cdot)$ parameterizes the distribution of $q_{r}(\bm{z}_{t}|\{\bm{x}^{v}_{t}\})$.
The merge operator will be introduced in the next section.
The following paragraphs will describe the identification procedure of $\bm{z}_{t}$.
Formally speaking, there exists an indeterminacy mapping $h=r\circ g$ between estimated $\hat{\bm{z}}_{t}=h(\bm{z}_{t})$ and true $\bm{z}_{t}$.
The goal is to identify the true value of $\bm{z}_{t}$.
We exploit the spatiotemporal dependencies of $\bm{z}_{t}$ to infer the values of latent variables from preceding states, as shown in Eq.\eqref{eq:spatial-temporal}.
To achieve this, we introduce the causal transition function $f(\cdot)$ to parameterize the conditional distribution $q_{f,h}(\bm{z}_{t}|\bm{z}_{H_{t}})$, where $\bm{z}_{H_{t}}$ denotes the set of preceding states of $\bm{z}_{t}$.
Empirically, we estimate an alternative value of the complete latent factor by $\tilde{\bm{z}}_{t}=f(\bm{z}_{H_{t}})$.

Motivated by the results of nICA \cite{hyvarinenNonlinearICAUsing2019,zimmermannContrastiveLearningInverts2021}, we can identify the underlying factors of data with a certain low indeterminacy by carefully exploiting the inherent structure of data via a variety of conditional distributions.
To this end, we can achieve model identifiability - that is, the ability to exactly recover the variables $\bm{z}_{t}$ and their associated causal relations - by learning the function $r$, in such a way as to ensure $h$ is an isometry.
Specifically, $h:\mathcal{Z}\mapsto \mathcal{Z}$ must statisfy the condition $\delta(\bm{z}_{t}, \tilde{\bm{z}}_{t})=\delta(h(\bm{z}_{t}),h(\tilde{\bm{z}}_{t}))$ everywhere, where $\delta(\cdot,\cdot)$ is a metric.
Using the conditional dependency in sequential latent states $\{\bm{z}_{t}\}$, a key insight is to estimate the function $f$ for causal relations modeling and estimate reverse function $r$:
\begin{equation}
    \max_{r,f}~\mathcal{I}(\tilde{\bm{z}}_t;\bm{z}_t)=\mathcal{I}(f(\bm{z}_{H_{t}},\bm{\epsilon}_{t});\bm{z}_{t}),
    \label{eq:mutual_information}
\end{equation}
where $\tilde{\bm{z}}_{t}=f(\bm{z}_{H_{t}},\bm{\epsilon}_{t})$ is the estimated latent variable based on the previous latent variables $\bm{z}_{H_{t}}$, $\mathcal{I}(\cdot,\cdot)$ indicates the mutual information.
We denote the prior distribution as $p(\tilde{\bm{z}}_{t} | \bm{z}_{H_{t}}$ and the modeled distribution as $q_{h,f}(\tilde{\bm{z}}_{t}|\bm{z}_{H_{t}})$.
The optimization problem defined in Eq.~\eqref{eq:mutual_information} can also be viewed as a miniization problem for the cross-entropy $H(\cdot\| \cdot)$,
\begin{equation}
    \min_{f,r} H\left(p(\tilde{\bm{z}}_{t}|\bm{z}_{H_{t}})\| q_{f,r}(\tilde{\bm{z}}_{t}|\bm{z}_{H_{t}})\right).
    \label{eq:cond_match}
\end{equation}

Nevertheless, directly optimizing the Eq.~\eqref{eq:cond_match} is typically difficult.
To this end, we employ contrastive learning as a surrogate, given that it has been proven to be a variational bound of mutual information maximization \cite{oordRepresentationLearningContrastive2019}.
Formally, the contrastive loss is defined as follows,
\begin{equation}
\begin{split}
    &\mathcal{L}_{\rm contr}(r,f;\mu,M):= \label{eq:loss_contr}\\
    &\underset{\substack{(\bm{x}, \tilde{\bm{x}}) \sim p_{\text {pos}} \\\left\{\bm{x}_{i}^{-}\right\}_{i=1}^{M} \stackrel{\text { i.i. }}{\sim} p_{\text {data }}}}{\mathbb{E}}\left[-\log \frac{e^{-\delta(\bm{z},\tilde{\bm{z}}) / \mu}}{e^{-\delta(\bm{z},\tilde{\bm{z}}) / \mu}+\sum_{i=1}^{M} e^{-\delta(\bm{z},\bm{z}^{-}) / \mu}}\right],
\end{split}
\end{equation}
$M\in\mathbb{Z}_{+}$ represents the fixed number of negative samples, $p_{\rm data}$ denotes the distribution of all observations, and $p_{\rm pos}$ signifies the distribution of positive pairs, we choose the $\ell_{1}$ as the metric, represented by $\delta(\cdot, \cdot)$, $\mu\geq 0$ represents the temperature. 
Here, we omit the subscript $t$ for simplicity.

As mentioned earlier, we can infer the latent variable $\bm{z}_{t}$ in two ways: directly, with $\hat{\bm{z}}_{t}=r(\{\bm{x}^{v}_{t}\})$, and indirectly, with $\tilde{\bm{z}}_{t}=f(\bm{z}_{H_{t}})$.
Given the expectation that these two inferred variables should be similar, we define the pair $(\hat{\bm{z}}_{t},\tilde{\bm{z}}_{t})$ as a positive pairs.
Subsequently, we sample data from the marginal distribution of observations, recover latent variables $\bm{z}^{-}_{t}$, and form negative pairs with $\hat{\bm{z}}_{t}$. 
At this point, the contrastive learning minimizes the cross-entropy between the ground-truth latent conditional distribution $p(\bm{z}_{t}|\bm{z}_{H_{t}})$ and a specific model distribution $q_{r,h}(\bm{z}_{t}|\bm{z}_{H_{t}})$. 
Thus, we have $\delta(h(\bm{z}_{t}),h(f(\bm{z}_{H_{t}})))\propto\delta(\bm{z}_{t},f(\bm{z}_{H_{t}}))$ for all $\bm{z}_{t}$ and $f(\bm{z}_{H_{t}})$.
This implies that the ground-truth $\bm{z}_t$ lies in the isometric space of the estimated $\hat{\bm{z}}_t$, and can be further recovered through simple transformations.

\subsection{Parametric causal transitions} \label{sec:causal_transition}
To represent the causal transition process of $f(\cdot)$, we consider a widely-used parametric formulation that aligns well with the Granger causality \cite{dingGrangerCausalityBasic2006}.
To parameterize the dependence of $z_{i,t}$ on its causal parents $\textbf{Pa}(z_{i,t})$ in Eq.~\eqref{eq:spatial-temporal}, we model the causal transition as following vector autoregressive (VAR) process.
Let $\bm{A}_{\tau}\in \mathbb{R}^{d\times d}$ be the full rank state transition matrix at lag $\tau$.
Assuming the true $\bm{z}_t$ is known, the causal transition can be represented as follow,
\begin{equation}
    \bm{z}_{t}=f(\bm{z}_{H_{t}},\bm{\epsilon}_{t})=\sum\nolimits^{L}_{\tau=1}\bm{A}_{\tau}\bm{z}_{t-\tau}+\bm{\epsilon}_{t},
    \label{eq:var-transition}
\end{equation}
here, $L$ is max time lag, $\bm{\epsilon}_{t}$ is an additive noise variable.
Using this formulation, we can reformulate the conditional distribution $p(\bm{z}_{t}|\bm{z}_{H_{t}})$ by changing the variables:
\begin{equation}
    p(\bm{z}_{t}|\bm{z}_{H_{t}})=p(\bm{z}_{t}-\sum\nolimits^{L}_{\tau=1}\bm{A}_{\tau}\bm{z}_{t-\tau})=p(\bm{\epsilon}_{t}).
\end{equation}
As the ground-truth $\bm{z}_t$ is not readily available, we reuse the estimated $\hat{\bm{z}}_t$ to calculate Eq.\eqref{eq:var-transition}.
Through this reformulation, we can represent the causal transition using a mutually independent noise distribution.

Through the above proper definitions of the contrastive pairs and the causal transition $f$, minimizers of contrastive loss determine the $h$ up to an isometry:
\begin{theorem}[Minimizers of contrastive objective recover the latent variables and their relations] \label{theorem:ce}
    Let latent space $\mathcal{Z}$ be a convex body in $\mathbb{R}^{d}$, $h=r\circ g:\mathcal{Z}\mapsto\mathcal{Z}$, $f$ be a causal transition function, and $\delta$ be a metric, induced by a norm. If $g$ is differentiable and injective, $r,f$ are expressive enough to be minimizers of contrastive objective $\mathcal{L}_{\rm contr}$ in Eq.~(\ref{eq:loss_contr}) for $M\to +\infty$, then, we have $h=r\circ g$ is invertible and affine mapping and $f$ captures the ground-truth causal relations.
\end{theorem}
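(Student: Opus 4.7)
The plan is to build on the identifiability framework of Zimmermann et al. (2021) and adapt their ``contrastive learning inverts the data-generating process'' result to the causal-temporal setting introduced here. First, I would examine the asymptotic limit of $\mathcal{L}_{\rm contr}$ as $M\to\infty$. Standard InfoNCE analysis splits the expectation into an alignment term $\mathbb{E}_{p_{\rm pos}}[\delta(\bm{z},\tilde{\bm{z}})/\mu]$ and a log-partition term $\mathbb{E}_{p_{\rm data}}[\log \mathbb{E}_{\bm{z}^-} e^{-\delta(\bm{z},\bm{z}^-)/\mu}]$. Dropping the $M$-dependent constant and using the expressiveness of $r$ and $f$, the minimizer must render the conditional density implied by $q_{r,f}(\bm{z}_t\mid \bm{z}_{H_t})\propto e^{-\delta(\hat{\bm{z}}_t,\tilde{\bm{z}}_t)/\mu}$ equal to the ground-truth conditional $p(\bm{z}_t\mid\bm{z}_{H_t})$ almost everywhere.

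Second, I would leverage the VAR parametrization in Section~\ref{sec:causal_transition}: the change of variables $p(\bm{z}_t\mid \bm{z}_{H_t})=p(\bm{\epsilon}_t)$ converts the conditional matching into a statement that the residual $\hat{\bm{z}}_t - \sum_\tau \bm{A}_\tau \hat{\bm{z}}_{t-\tau}$ has a law proportional (in the model) to $e^{-\|\cdot\|/\mu}$ after pushforward through $h$, while the truth $\bm{\epsilon}_t$ has the matching law in the original coordinates. Because the equality must hold for every $\bm{z}_{H_t}$ inside the convex body $\mathcal{Z}$, this fixes $\delta(\hat{\bm{z}}_t,\tilde{\bm{z}}_t)=\delta(h(\bm{z}_t),h(f(\bm{z}_{H_t})))$ to be proportional to $\delta(\bm{z}_t,f(\bm{z}_{H_t}))$ pointwise, which is precisely the isometry condition stated in the contrastive-learning module.

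Third, I would invoke the Mazur--Ulam theorem: any bijective isometry between normed real vector spaces is affine. Combining this with the assumption that $g$ is differentiable and injective and $r$ is expressive enough to match it, one obtains that $h=r\circ g$ is an invertible affine map on the convex body $\mathcal{Z}$. Writing $h(\bm{z})=\bm{P}\bm{z}+\bm{c}$ and substituting into the learned VAR relation, the recovered coefficients become $\bm{P}\bm{A}_\tau \bm{P}^{-1}$; since $\bm{P}$ is fixed (shared across time), the support pattern of $\bm{A}_\tau$, which encodes the Granger causal relations among the independent noise-driven components, is preserved up to this affine change of basis. Hence $f$ captures the ground-truth causal relations in the identifiable sense.

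The main obstacle I anticipate lies in the second step: showing rigorously that matching conditionals for \emph{all} histories $\bm{z}_{H_t}$, rather than merely in distribution, forces the pointwise isometry condition rather than only an equality of push-forward measures. This requires carefully exploiting the fact that the exponential-of-norm family is identifiable from its conditionals and that $\mathcal{Z}$ is a convex body so that the argument of $\delta$ can be varied freely, which is where the convexity and the norm-induced metric assumptions become essential. The remaining affineness and causal-structure conclusions then follow from Mazur--Ulam and the independence of $\bm{\epsilon}_t$.
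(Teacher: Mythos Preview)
Your proposal is correct and follows essentially the same route as the paper's own proof: both take the $M\to\infty$ limit of $\mathcal{L}_{\rm contr}$ to obtain a cross-entropy between the ground-truth conditional $p(\tilde{\bm{z}}_t\mid\bm{z}_t)$ and the model conditional $q_{h,f}$, equate them by expressiveness, cancel normalizing constants to obtain the pointwise distance equality $\delta(\tilde{\bm{z}}_t,\bm{z}_t)\propto\delta(h(\tilde{\bm{z}}_t),h(\bm{z}_t))$, apply Mazur--Ulam to conclude $h$ is affine, and then substitute into the VAR to recover the transition matrices up to conjugation by the linear part of $h$. The obstacle you flag---getting from equality of conditionals to the pointwise isometry---is exactly the step the paper handles by first showing the partition functions $C_p(\bm{z}_t)=C_h(\bm{z}_t)$ coincide, so your anticipation of the technical crux is accurate.
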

Theorem~\ref{theorem:ce} guarantees the identifiability of the encoder $r$ and causal transition function $f$.
Specifically, the contrastive loss enforces $\delta(h(\tilde{\bm{z}}_{t}), h(\bm{z}_{t}))\propto \delta(\tilde{\bm{z}}_{t},\bm{z}_{t})$ almost everywhere, which leads $h$ to be an isometry, \textit{i.e.}, there exists an orthogonal matrix $\bm{U}\in\mathbb{R}^{d\times d}$ such that $\hat{\bm{z}}_{t}=\bm{U}\bm{z}_{t}$ and $\hat{\bm{A}}_{\tau}=\bm{A}_{\tau}\bm{U}^{\top}$.
Further, when $\bm{\epsilon}_{t}$ is non-Gaussian (\textit{e.g.}, Laplacian), and metric $\delta(\cdot,\cdot)$ is non-isotropic (\textit{e.g.}, $\ell_{1}$ norm), there is only a channel permutation $\pi$ between estimations and ground-truth such that $\hat{z}_{i,t}=s_{i}z_{\pi(i),t}$, where $s_{i}$ is a scaling constant.

\subsection{Merge Operation}
As mentioned earlier, a significant challenge in learning a spatiotemporal dependent source from multi-view data lies in aggregating the view-specific source, $\bm{z}^{v}_{t}$, into the complete latent source, $\bm{z}_{t}$.
This is because each view only holds a portion of the total information.
Given that sources estimated from each view exhibit unique uncertainties, ensuring their alignment within the same subspace to yield the desired complete source presents a non-trivial task.
Yet, given that a portion of the information is shared among different sources, it is feasible to establish transformation between them by identifying a common source.
We subsequently cast the problem of identifying a common source as one of optimal transport objectives, which involves searching plans to move factors from view-specific sources to the common source.

\paragraph{Permutation learning}
Assume there is a common source $\bm{c}_{t}\in\mathbb{R}^{d_{c}}$ shared across all view-specific sources $\bm{z}^{v}_{t}$.
While directly searching for the correspondence of common components can be infeasible, we propose to enforce the $\bm{c}^{v}_{t}$ be in the top-$d_{c}$ channels of $\bm{z}^{v}_{t}$ via permutation learning.
Formally, we can achieve this by multiplying the $\bm{z}^{v}_{t}$ with a doubly stochastic matrix $\bm{B}^{v}\in \mathfrak{B}_{d_{v}}$ as follows,
\begin{equation}
    \bar{\bm{z}}^{v}_{t}=\bm{B}^{v}\hat{\bm{z}}^{v}_{t},\quad \bm{c}^{v}_{t}=\bar{\bm{z}}^{v}_{1:d_{c},t}.
    \label{eq:permuation-operation}
\end{equation}
After that, we can impose the permutations that generate embeddings whose top $d_{c}$ entries are most correlated:
\begin{equation}\label{eq:max_cca}
    \max_{\{\bm{B}^{v}\}}\mathbb{E}_{v'\neq v}\text{Tr}\left((\bm{B}^{v}\hat{\bm{z}}^{v}_{t})_{1:d_{c}}(\bm{B}^{v'}\hat{\bm{z}}^{v'}_{t})_{1:d_{c}}^{\top}\right).
\end{equation}
In essence, our goal is to ensure that all extracted instances of $\{\bm{c}^{v}_{t}\}$ have entries closely resembling those of the identified common components.
Notably, the procedure outlined above equates to regularizing these $\{\bm{c}^{v}_{t}\}$ so that they are concentrated around their mean.
To achieve this, we first estimate a mean center from the currently extracted common sources:
\begin{equation}\label{eq:common_center}
    \bm{c}^{*}_{t} = \underset{\bm{c}_{t}\in \mathbb{R}^{d_{c}}}{\arg\min}\sum_{v}\| \bm{c}^{v}_{t}-\bm{c}_{t}\|^{2}.
\end{equation}
In practice, we achieve the objective of Eq.~(\ref{eq:max_cca}) by updating the transport plans $\{\bm{B}^{v}\}$ and minimizing transport cost from top-$d_{c}$ view specific components to $\bm{c}^{*}_{t}$ in an alternating manner:
\begin{equation}\label{eq:opt-permutation}
    \mathcal{L}_{m}=\sum^{2}_{v=1}\mathbb{E}\left[\|(\bm{B}^{v}\bm{z}^{v}_{t})_{1:d_{c}}-\bm{c}^{*}_{t}\|^{2}\right]+\eta R(\bm{B}^{v}),
\end{equation}
where $R(\bm{B}^{v})=-\sum_{i,j}B^{v}_{i,j}\log(B^{v}_{i,j})$, $\eta>0$ is a constant.
This criterion effectively results in learning that $\bm{c}^{1}_{t}=\bm{c}^{2}_{t}$, which represents the ultimate solution for the merging of latent variables.

After this reformulation, with the center $\bm{c}_t^*$ fixed, we can search for the locally optimal $\{\bm{B}_v\}$ separately.
Crucially, each sub-problem now becomes a standard combinatorial assignment problem \cite{peyreComputationalOptimalTransport2020}, which involves the search for optimal transport plans that assign common components from view-specific sources to the common source $\bm{c}_t^*$.
Ultimately, by iteratively applying the center mean estimation procedure from Eq.~\eqref{eq:common_center} and utilizing the Sinkhorn algorithm, we can achieve the minimizer of our final objective, as expressed in Eq.~\eqref{eq:opt-permutation} (refer to Appendix~\ref{app:permuation_expansion} for additional details).
We can consider the application of the transformation $\bm{B}^{v}$ as a process of smooth component sorting.
When we can identify estimated latent sources up to a permutation transformation, we can reduce the doubly stochastic matrices ${\bm{B}^{v}}$ to permutation matrices, \textit{i.e.}, $\{\bm{P}^{v}|\bm{P}^{v}\in \{0,1\}^{d_{v}\times d_{v}}\}$. This resolution addresses the issue of non-corresponding components.

For merge operator $m$, we concanate the $\bm{c}^{*}_{t}$ with all remaining private components:
\begin{equation}
    \hat{\bm{z}}_{t}=\left[\begin{array}{c}
        \bm{c}^{*}_{t} \\
        \bar{\bm{z}}^{1}_{d_{c}+1:d_{1},t} \\
        \bar{\bm{z}}^{2}_{d_{c}+1:d_{2},t}
        \end{array}\right]=m\left(\left[\begin{array}{c}
        \hat{\bm{z}}^{1}_{t} \\
        \hat{\bm{z}}^{2}_{t}
        \end{array}\right]\right).
    \label{eq:merge-operator}
\end{equation}
Consequently, we can input the merged latent source into the contrastive learning and the inference of causal relations, as outlined in Sec~\ref{sec:contrastive_learning}, thereby completing the entire objective.

\begin{theorem}[Minimizers of the multi-view objective maintains the channel corespondency] \label{theorem:mult-view}
    Let $\mathcal{Z}^{v}\subseteq \mathbb{R}^{d_{v}},\mathcal{Z}=\cup\mathcal{Z}^{v}$ and $\cap \mathcal{Z}^{v}\neq \emptyset$. If $\bm{B}^{v}\in \mathfrak{B}_{d_{v}}$ be a doubly stochastic matrix, $r,f,\{\bm{B}^{v}\}$ are the minimizers of contrastive objective for $M\to +\infty$, then, we have $h=r\circ g$ is an isometry, and $\{\bm{B}^{v}\}$ can rearrange the components of each view source such that common components from each view source are aligned.
\end{theorem}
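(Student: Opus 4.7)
The plan is to reduce Theorem~\ref{theorem:mult-view} to two essentially independent claims that can be argued in sequence: (i) the contrastive objective evaluated on the merged latent estimate $\hat{\bm{z}}_t$ drives $h=r\circ g$ to be an isometry on the full latent space $\mathcal{Z}$, and (ii) the merge objective $\mathcal{L}_m$, together with the doubly-stochastic constraint on $\{\bm{B}^v\}$, forces the common components to be pushed into the first $d_c$ channels of each $\bar{\bm{z}}_t^v$. The first claim essentially recycles Theorem~\ref{theorem:ce} with the merged estimator playing the role of $r(\bm{x})$; the second relies on a uniqueness/Birkhoff-type argument.

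For claim (i), I would first observe that the composed map $r(\{\bm{x}_t^v\})=m(\{r^v(\bm{x}_t^v)\})$, built from $r^v$, $\{\bm{B}^v\}$, and the merge operator in Eq.~\eqref{eq:merge-operator}, is a well-defined map from $\prod_v \mathcal{X}^v$ to $\mathcal{Z}$. When $(r,f,\{\bm{B}^v\})$ are minimizers of $\mathcal{L}_{\rm contr}$ with $M\to\infty$, the contrastive term is a variational bound of $\mathcal{I}(\tilde{\bm{z}}_t;\bm{z}_t)$ and, by the same InfoNCE/cross-entropy argument recalled in Eq.~\eqref{eq:cond_match}, its minimizer matches the conditional distribution $q_{f,r}(\cdot|\bm{z}_{H_t})$ to the ground-truth $p(\cdot|\bm{z}_{H_t})$ up to the invariances preserved by the metric $\delta$. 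Following exactly the isometry step of Theorem~\ref{theorem:ce}, this entails $\delta(h(\bm{z}_t),h(\tilde{\bm{z}}_t))\propto\delta(\bm{z}_t,\tilde{\bm{z}}_t)$ almost everywhere, so $h$ is an isometry on $\mathcal{Z}$; under the non-Gaussian noise and $\ell_1$ metric hypotheses it further reduces to a signed permutation, giving an orthogonal $\bm{U}$ with $\hat{\bm{z}}_t=\bm{U}\bm{z}_t$ and $\hat{\bm{A}}_\tau=\bm{A}_\tau\bm{U}^\top$.

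For claim (ii), I would use the fact that the unique unconstrained minimizer in $\bm{c}_t\in\mathbb{R}^{d_c}$ of $\sum_v\|\bm{c}_t^v-\bm{c}_t\|^2$ is the mean $\bm{c}_t^*=\tfrac{1}{V}\sum_v (\bm{B}^v\hat{\bm{z}}_t^v)_{1:d_c}$, so the alternating scheme in Eqs.~\eqref{eq:common_center}--\eqref{eq:opt-permutation} is a coordinate descent on a jointly convex (in $\bm{c}_t$, for fixed $\bm{B}^v$) and linear-plus-entropy (in $\bm{B}^v$, for fixed $\bm{c}_t$) surrogate. At the joint minimizer, because by claim (i) each $\hat{\bm{z}}_t^v$ already identifies its latent source up to signed permutation, there exists a permutation $\bm{P}^v$ such that $\bm{P}^v\hat{\bm{z}}_t^v$ places the common coordinates of $\bm{z}_t^v$ in the first $d_c$ channels; plugging any such $\bm{P}^v$ into Eq.~\eqref{eq:opt-permutation} drives the quadratic transport cost to zero, so the global minimum is attained, and by Birkhoff's theorem the entropic regularizer together with the doubly-stochastic constraint ensures that the Sinkhorn iterates converge to a matrix arbitrarily close to such a permutation. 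Therefore at the minimizer $(\bm{B}^v\hat{\bm{z}}_t^v)_{1:d_c}=\bm{c}_t^*$ for all $v$, which is precisely the alignment conclusion.

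The main obstacle I anticipate is the coupling between the two objectives: the contrastive loss is evaluated on the merged $\hat{\bm{z}}_t$, whose construction in Eq.~\eqref{eq:merge-operator} already involves $\{\bm{B}^v\}$, so one cannot invoke Theorem~\ref{theorem:ce} in a completely modular way. I would handle this by arguing that any global minimizer of $\mathcal{L}_{\rm contr}+\mathcal{L}_m$ must be a simultaneous minimizer of each summand (since both have a common achievable zero-gap floor: the contrastive lower bound and the transport cost zero). A second subtle point is that Theorem~\ref{theorem:ce}'s permutation-scaling identifiability is applied on the merged space $\mathcal{Z}$, not individually per view; one must therefore verify that the restriction of the global signed permutation to the first $d_c$ coordinates coincides on all views, which is exactly what the mean-center term in Eq.~\eqref{eq:common_center} enforces, closing the argument.
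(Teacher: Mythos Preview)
Your proposal is correct and follows essentially the same approach as the paper: invoke Theorem~\ref{theorem:ce} on the merged estimator for the isometry claim, then argue that minimizing $\mathcal{L}_m$ forces each $\bm{B}^v$ to a permutation aligning the common components. The paper reaches the second conclusion slightly more directly---observing that both non-negative summands of $\mathcal{L}_m$ (the quadratic transport cost and the entropy $R(\bm{B}^v)$) must vanish at any minimizer, hence $\bm{B}^v$ is a permutation with $(\bm{B}^v\hat{\bm{z}}^v_t)_{1:d_c}=\bm{c}^*_t$, followed by a short contradiction showing $\bm{c}^*_t$ must be the true common source---whereas you route through existence of a zero-cost permutation and Sinkhorn convergence; your explicit discussion of the coupling obstacle and its resolution via simultaneous minimization is actually more careful than the paper's treatment, which leaves this implicit.
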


\section{Optimization}
We use deep neural networks to implement $f(\cdot)$ and $r^{v}(\cdot)$.
The process of learning latent variables $\bm{z}_{t}$ from multi-view data involves contrastive learning on estimated latent variables and the learning of permutations for view-specific variables.
We jointly train view-specific reverse networks as well as a causal transition network.
The permutation matrices $\{\bm{B}^v\}$ are also alternatively optimized during this process.

\paragraph{Enhance noises distribution learning}
Contrastive learning is designed to minimize the distance between positive pairs, in other words, it brings $\hat{\bm{z}}_{t}$ and $f(\hat{\bm{z}}_{H_{t}})$ closer together.
To emphasize this property, we further employ an objective to minimize the residual as:
\begin{equation}
    \mathcal{L}_{\epsilon}=\mathbb{E}[\delta(\hat{\bm{z}}_{t},f(\hat{\bm{z}}_{H_{t}}))].
\end{equation}
In parallel, to enhance the mutual independence property of noise $\epsilon_{i,t}$, we follow the approach of \cite{yaoLearningTemporallyCausal2021} and create a discriminator network $\mathcal{D}(\cdot)$ implemented by MLPs.
This network is used for discriminating the $\{\hat{\epsilon}_{i,t}\}$ and randomly permutated versions $\{\hat{\epsilon}_{i,t}\}^{\text{perm}}$,
\begin{equation}
    \mathcal{L}_{\mathcal{D}}=\mathbb{E}[\log \mathcal{D}(\{\hat{\epsilon}_{i,t}\}) - \log (1-\mathcal{D}(\{\hat{\epsilon}_{i,t}\}))].
\end{equation}
Combining all these elements with weights, the objective is:
\begin{equation}
    \mathcal{L}=\mathcal{L}_{\text{contr}} + \beta_{1} \mathcal{L}_{m} + \beta_{2} \mathcal{L}_{\epsilon} + \beta_{3} \mathcal{L}_{\mathcal{D}}.
\end{equation}
For stable training, we perform the optimization of $\mathcal{L}_{\text{contr}}$ and $\mathcal{L}_{m}$ alternately.

\section{Experiments}
In this section, we present our experimental results on three multi-view scenarios to validate the superiority of MuLTI.
More empirical results can be found in Appendix~\ref{app:experiments}.

\subsection{Datasets}
We use synthetic and real-world datasets for evaluating the latent process identification task.
Different view-specific encoders are used in different datasets for extracting latent variables.
In what follows, we briefly introduce the datasets.

\noindent \textbf{Multi-view VAR} is a synthetic dataset modified from \cite{yaoLearningTemporallyCausal2021}.
To generate the latent process, we first select transition matrices $\{\bm{A}_{\tau}\}^{L}_{\tau=1}$ from a uniform distribution $U[-0.5, 0.5]$, and then sample initial states $\{\bm{z}_{t}\}^{L}_{t=1}$ from a normal distribution to generate sequences $\{\bm{z}_{t}\}^{T}_{t=1}$.
To generate multi-view time series, we randomly select $d_{1}$ and $d_{2}$ dimensions of latent variables for each view.
The nonlinear observations of each view are created by mixing latent variables with randomly parameterized MLPs $g^{v}(\cdot)$ following the previous work \cite{hyvarinenNonlinearICATemporally2017}.

\noindent \textbf{Mass-spring system} is a video dataset adopted from \cite{liCausalDiscoveryPhysical2020} specifically designed for a multi-view setting.
A mass-spring system consists of 8 movable balls; the balls are either connected by springs or are not connected at all.
As a result of random external forces, the system exhibits different dynamic characteristics depending on its current state and internal constraints.
We create two views for this video dataset, as shown in Figure~\ref{fig:balls-illustration}, in each of which only 5 of the 8 balls are observable.
Consequently, each pair of views includes two videos, each with a duration of 80 frames.
This dataset contains 5,000 pairs of video clips in total.
\begin{figure}[ht]
    \centering
    \includegraphics[width=0.8\linewidth]{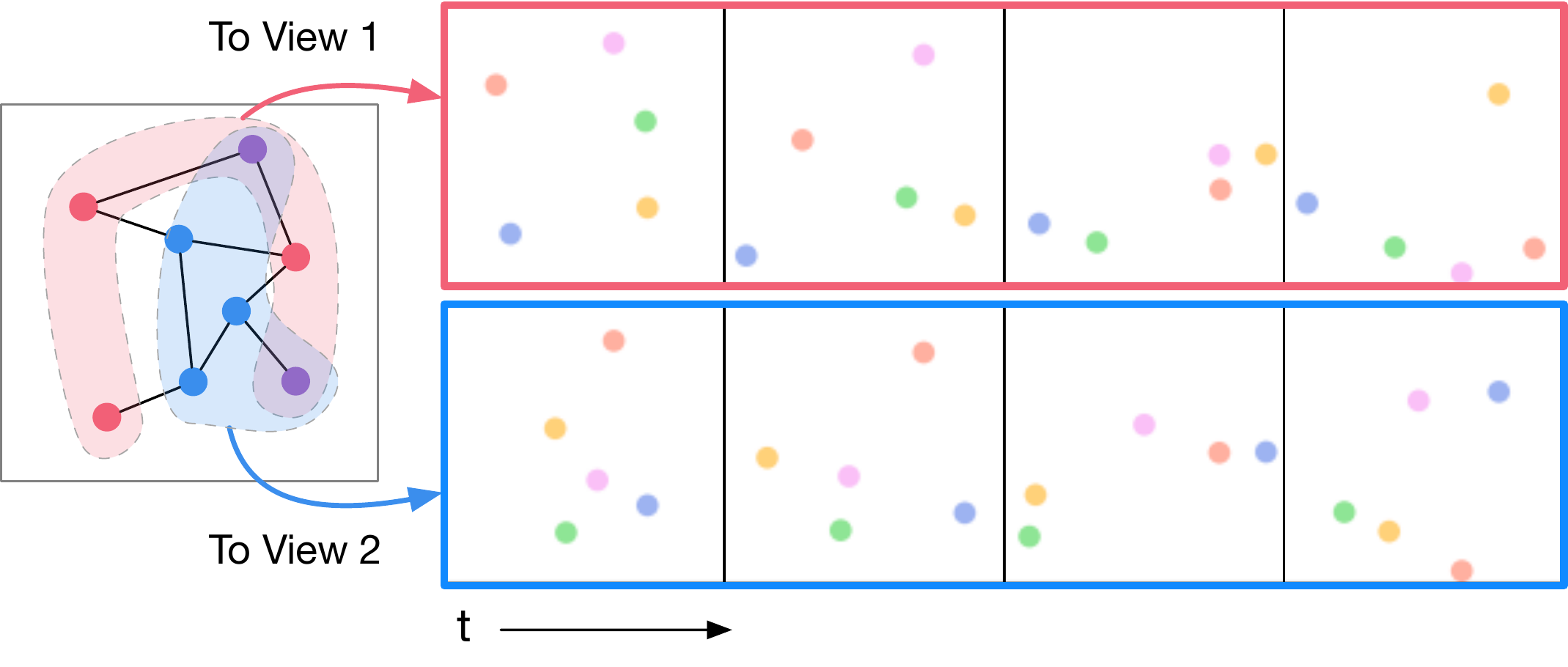}
    \caption{Illustration of multi-view Mass-spring system.}
    \label{fig:balls-illustration}
\end{figure}

\noindent \textbf{Multi-view UCI} Daily and Sports Activities \cite{altunComparativeStudyClassifying2010} is a multivariate time series dataset comprising 9,120 sequences, capturing sensor data for 19 different human actions performed by 8 subjects.
Each sample activity contains 125 time steps, each captured by nine sensors, yielding a total of 45 dimensions.
We construct the multi-view setting following the \cite{liMultiViewTimeSeries2016}.
The 27 dimensions located on the torso, right arm, and left arm are collectively regarded as view 1, while the remaining 18 dimensions on the left and right legs constitute view 2.

\subsection{Evaluation Setup}
To measure the identifiability of latent causal variables, we compute Mean Correlation Coefficient (MCC) on the validation datasets for revealing the indeterminacy up to permutation transformations, and $R^{2}$ for revealing the indeterminacy up to linear transformations.
We assess the accuracy of our causal relations estimations by comparing them with the actual data structure, quantified via the Structural Hamming Distance (SHD) on the validation datasets.

\paragraph{Baselines}
Four different methods for estimating latent variables: (1) BetaVAE \cite{higgins2016beta} neither accounts for the temporal structure nor provides an identifiability guarantee; (2) SlowVAE \cite{klindtNonlinearDisentanglementNatural2020}, PCL \cite{hyvarinenNonlinearICATemporally2017}, and GCL \cite{hyvarinenNonlinearICAUsing2019} identify independent sources from observations; (3) ShICA \cite{richardSharedIndependentComponent2021} identifies the shared sources from multi-view observations; (4) CL-ICA \cite{zimmermannContrastiveLearningInverts2021} recovers the latent variables via contrastive learning; (5) LEAP \cite{yaoLearningTemporallyCausal2021} utilizes temporally engaged sources and causal mechanism but only suitable single view data.

For multi-view time series adaptation of baseline methods -- $\beta$-VAE, SlowVAE, PCL, CL-ICA, and LEAP -- we concatenate the series into single-view format, aligning them by the feature dimension.
In the GCL setting, we designate as positive pairs the views originating from the same source, and negative pairs those stemming from distinct sources.

\paragraph{Implementation details}
For the Multi-view VAR dataset, we use architectures composed of MLPs and Leaky ReLU units.
Ground-truth latent variable dimension $d$ is set to 10, the noise distribution is set to Laplacian$(0, 0.05)$.
We set the batch size to 2400, employ the Adam optimizer with a learning rate of 0.001, and utilize $\beta_{1}=0.01, \beta_{2}=0.01, \beta_{3}=1e-5$.
To verify the influence of the overlapping ratio of views, we select $d_{c}\in \{10, 4, 2, 0\}$ for evaluation.
Here, $d_{c}=0$ and $d_{c}=10$ indicate situations where the latent variables for generating views are fully overlapped and have no overlaps, respectively.
To ensure a fair comparison, all baseline methods utilize similar encoders, and the time-lag $L$ of the causal transition module is set to equal the ground truth.

For the Mass-spring system, we create view-specific encoders following \cite{liCausalDiscoveryPhysical2020}, which has similar architectures to the unsupervised keypoints discovery perception module.
We set the time lag $L = 2$ for the causal transition module as the approximated process corresponds to a mass-spring system, which is a dynamic system of second order.
The dimension of estimated latent variables is set to be the same as ground-truth, \textit{i.e.}, 8 coordinates $(x,y)$ account for $d=16$.
In practice, we first pre-train two pairs of keypoint encoder-decoder in each view.
Then, the pre-trained encoders are taken as $r^{v}(\cdot)$ corresponding to each view.

For the Multi-view UCI dataset, we construct two separate encoders with MLPs for two views.
The latent dimensions of view 1 and view 2 are set to $d_{1}=12$ and $d_{2}=9$ respectively, the shared latent dimension is set to $d_{c}=3$, and the complete latent dimension accounts for $d=18$.
The time lag of the causal transition module is set to $L=1$.
We initially train the MuLTI on the complete dataset, following which we extract the latent variables for downstream tasks.

\subsection{Main Results}
\begin{table*}[tbp]
\centering
\caption{Identifiability results on the VAR process ($d=10,L=2$). Mean+standard deviation over 5 random seeds.}
\label{tab:var-results}
\begin{tabular}{p{1.7cm}<{\centering}|llllllll}
\toprule
\multirow{2}{*}{Methods} & \multicolumn{4}{c|}{$R^2~(\%)$}                                                                                                        & \multicolumn{4}{c}{MCC}                                                                                                       \\ \cmidrule(l){2-9} 
                         & \multicolumn{1}{c}{$d_{c}=d$} & \multicolumn{1}{c}{$d_{c}=4$} & \multicolumn{1}{c}{$d_{c}=2$} & \multicolumn{1}{c|}{$d_{c}=0$} & \multicolumn{1}{c}{$d_{c}=d$} & \multicolumn{1}{c}{$d_{c}=4$} & \multicolumn{1}{c}{$d_{c}=2$} & \multicolumn{1}{c}{$d_{c}=0$} \\ \midrule
$\beta$-VAE              & 40.03{\tiny $\pm$5.31}        & 41.43{\tiny $\pm$4.22}        & 40.67{\tiny $\pm$3.44}        & 38.36{\tiny $\pm$2.71}         & 29.37{\tiny $\pm$8.20}        & 38.31{\tiny $\pm$6.31}        & 31.92{\tiny $\pm$5.61}        & 41.88{\tiny $\pm$4.62}        \\
SlowVAE                  & 60.32{\tiny $\pm$ 5.56} & 63.21{\tiny $\pm$ 5.51} & 62.12{\tiny $\pm$ 6.13} & 61.23{\tiny $\pm$ 3.17} & 50.32{\tiny $\pm$ 3.49}     & 51.24{\tiny $\pm$ 4.71} & 52.55{\tiny $\pm$ 2.39} & 54.21{\tiny $\pm$ 3.07} \\
PCL                      & 69.78{\tiny $\pm$3.20}        & 80.24{\tiny $\pm$3.32}        & 75.71{\tiny $\pm$2.71}        & 74.56{\tiny $\pm$2.90}         & 52.64{\tiny $\pm$2.33}        & 54.63{\tiny $\pm$2.12}        & 53.42{\tiny $\pm$1.91}        & 55.61{\tiny $\pm$2.62}        \\
GCL                      & 73.45{\tiny $\pm$4.34}        & 82.41{\tiny $\pm$3.12}        & 77.65{\tiny $\pm$2.52}        & 73.21{\tiny $\pm$3.31}         & 53.55{\tiny $\pm$1.71}        & 57.32{\tiny $\pm$2.41}        & 52.42{\tiny $\pm$1.70}        & 55.23{\tiny $\pm$2.31}        \\
ShICA                    & 41.08{\tiny $\pm$5.76}        & 39.65{\tiny $\pm$4.51}        & 37.61{\tiny $\pm$4.32}        & 38.43{\tiny $\pm$3.12}         & 28.71{\tiny $\pm$5.21}        & 37.21{\tiny $\pm$5.13}        & 29.31{\tiny $\pm$4.97}        & 31.47{\tiny $\pm$3.61}        \\
CL-ICA                   & 21.48{\tiny $\pm$6.71}        & 29.37{\tiny $\pm$4.32}        & 24.27{\tiny $\pm$3.33}        & 23.89{\tiny $\pm$2.71}         & 26.79{\tiny $\pm$5.51}        & 33.35{\tiny $\pm$4.51}        & 29.39{\tiny $\pm$5.11}        & 23.68{\tiny $\pm$4.90}        \\
LEAP                     & \textbf{99.59}{\tiny $\pm$0.05}        & 99.67{\tiny $\pm$0.09}        & 99.48{\tiny $\pm$0.07}        & \textbf{99.74}{\tiny $\pm$0.06}         & 57.56{\tiny $\pm$2.44}        & 62.48{\tiny $\pm$2.12}        & 65.44{\tiny $\pm$1.61}        & 67.75{\tiny $\pm$1.32}        \\
\rowcolor{Gray} MuLTI                    & 99.46{\tiny $\pm$0.09}        & \textbf{99.72}{\tiny $\pm$0.05}        & \textbf{99.68}{\tiny $\pm$0.07}        & 99.45{\tiny $\pm$0.08}         & \textbf{99.80}{\tiny $\pm$0.12}        & \textbf{99.27}{\tiny $\pm$0.43}        & \textbf{99.39}{\tiny $\pm$0.09}        & \textbf{99.43}{\tiny $\pm$0.04}        \\ \bottomrule
\end{tabular}
\end{table*}

\paragraph{MuLTI achieves the best identifiability}
We report both $R^{2}$ and MCC in Table~\ref{tab:var-results} for revealing the identifiability of each method.
As shown in the results, MuLTI achieves the best identifiability across most settings, particularly over the MCC metrics, where MuLTI outperforms all rivals by a notable margin.
The results indicate that, without modeling the causal transition in the conditional distribution, the independence conditions of methods such as $\beta$-VAE, SlowVAE, ShICA, CL-ICA, PCL, and GCL fail to recover the latent variables, even up to linear transformations.
LEAP successfully identifies the latent process up to linear transformations using causal transition constraints.
Yet, it struggles to recover the process up to permutation transformation for observations from distinct views, despite conditions suitable for permutation identifiability.

\paragraph{MuLTI successfully identifies the latent process}
\begin{figure}[t]
    \centering
    \includegraphics[width=0.85\linewidth]{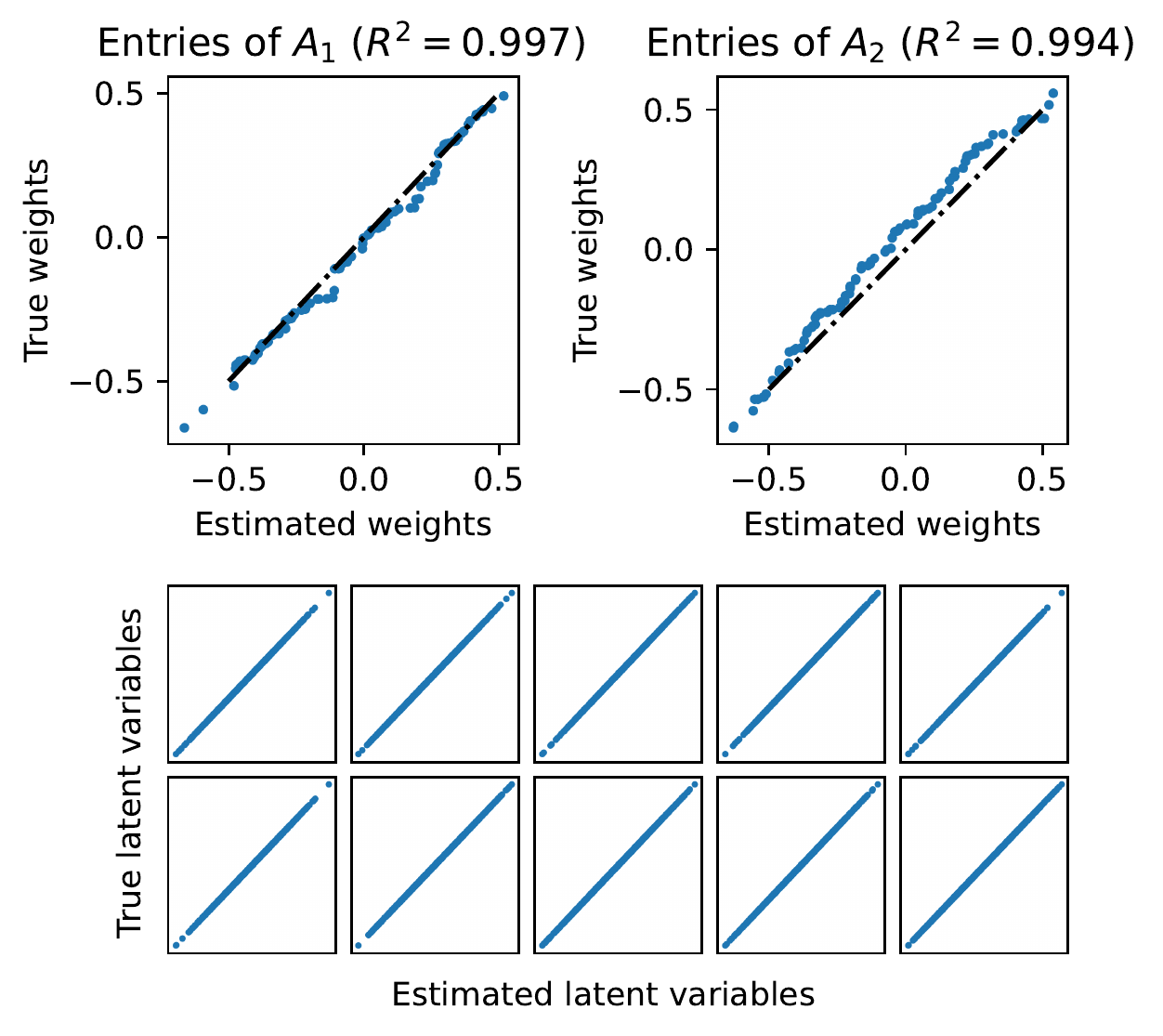}
    \caption{Results for latent VAR process ($d=10,d_{c}=2$) identification. Scatters are paired estimations and ground-truth.}
    \label{fig:var-recover}
\end{figure}
We show that MuLTI successfully identifies the latent process on both the Multi-view VAR dataset and the Mass-spring system.
Figure~\ref{fig:var-recover} demonstrates the performance of our MuLTI in estimating the latent process where $d=10,d_{c}=2$.
The results suggest that our model nearly perfects the learning of the VAR transition matrices, with the identified latent variables closely resembling the true variables, thus implying a comprehensive identification of the latent process.

The objective of the Mass-spring system task is to estimate latent variables, representing the coordinates of each ball, and transition matrices that elucidate their connections.
As displayed in the left panel of Figure~\ref{fig:exp-ball-corr}, the estimated latent variables precisely match the ground-truth, and the recovered transition matrices correspond to the ground-truth ball connections, as indicated by an $\text{SHD}=0$.
\begin{figure}[t]
    \centering
    \includegraphics[width=0.8\linewidth]{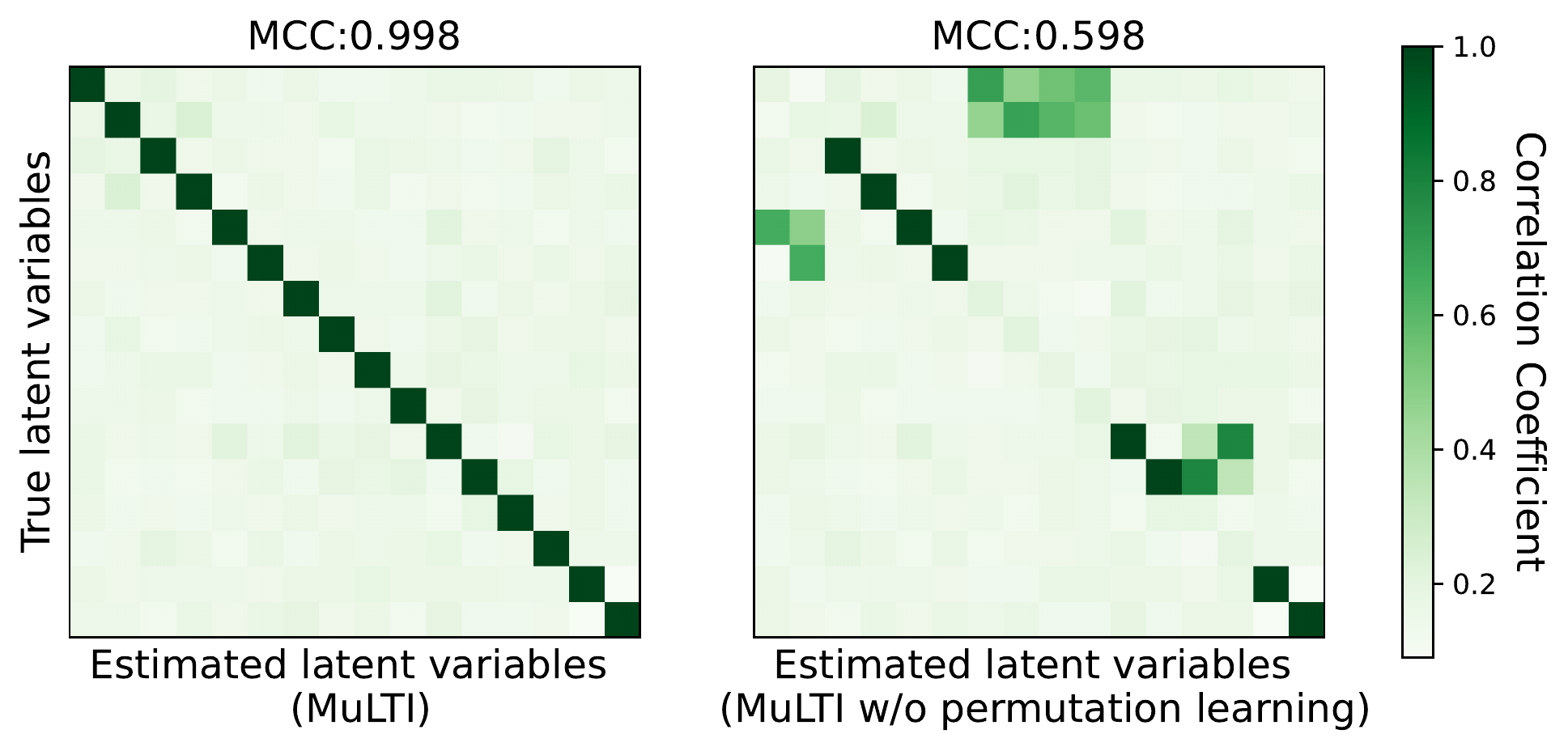}
    \caption{Correlation matrix of recovered latent variables vs. ground-truth latent variables of Mass-spring system.}
    \label{fig:exp-ball-corr}
\end{figure}

\paragraph{MuLTI improves downstream task}
Given that causal variables are often not readily available, directly assessing their identification and relationships in real-world data proves challenging.
However, under most conditions, identifying the latent process from observations can simplify downstream tasks (\textit{e.g.}, classification and clustering) or directly aid in the extraction of meaningful features.
Therefore, we employ a real-world multi-view time series dataset to demonstrate the effectiveness of latent process identification.
We employ multivariate time series classifiers - specifically, reservoir computing \cite{bianchiReservoirComputingApproaches2021}.
This approach encodes the multivariate time series data into a vectorial representation in an unsupervised manner, allowing us to evaluate learned latent variables $\{\hat{\bm{z}}_{t}\}^{T}_{t=1}$ in the downstream task.

To demonstrate the efficacy of the learned latent representations, we independently train both the baseline methods and MuLTI on the entire dataset, subsequently extracting 18-dimensional features to form a new dataset.
Then, we train and evaluate the RC classifier on the new datasets extracted by each method.
The results are shown in Table~\ref{tab:uci-results}. 
We report classification accuracy with varying training set sizes, \textit{i.e.}, we randomly select $N_{tr} \in \{ 10, 20, 30, 40, 50 \}$ samples per class for training, using the remaining samples for testing.
Compared to the baseline methods, the pre-trained features from MuLTI enhance the downstream performance of the RC classifier, particularly for small training set classification tasks.
\begin{table}[tbp]
\centering
\caption{Classification accuracy on Multi-view UCI dataset. $N_{tr}$ is the number of training samples randomly chosen from each class.}
\label{tab:uci-results}
\resizebox{\columnwidth}{!}{%
\begin{tabular}{@{}llllll@{}}
\toprule
$N_{tr}$ & raw data          & +$\beta$-VAE          & +LEAP                 & +ShICA                & +MuLTI                \\ \midrule
10       & 74.21{\tiny $\pm$0.26} & 72.53{\tiny$\pm$1.13} & 73.57{\tiny$\pm$1.90} & 73.17{\tiny$\pm$0.77} & \textbf{85.59}{\tiny$\pm$0.05} \\
20       & 90.00{\tiny$\pm$0.38}  & 85.82{\tiny$\pm$0.53} & 86.10{\tiny$\pm$0.86} & 86.59{\tiny$\pm$0.51} & \textbf{92.14}{\tiny$\pm$0.38} \\
30       & 92.92{\tiny$\pm$0.74}  & 89.37{\tiny$\pm$1.54} & 88.90{\tiny$\pm$0.69} & 91.29{\tiny$\pm$0.31} & \textbf{94.18}{\tiny$\pm$0.14} \\
40       & 94.95{\tiny$\pm$0.16}  & 91.16{\tiny$\pm$0.49} & 91.79{\tiny$\pm$0.07} & 94.01{\tiny$\pm$0.55} & \textbf{96.15}{\tiny$\pm$0.81} \\
50       & 94.92{\tiny$\pm$0.28}  & 92.53{\tiny$\pm$0.04} & 93.00{\tiny$\pm$0.03} & 94.78{\tiny$\pm$0.04} & \textbf{96.82}{\tiny$\pm$0.30} \\ \bottomrule
\end{tabular}%
}
\end{table}

\subsection{Ablation Studies}
\begin{table}[tbp]
\centering
\caption{Ablation results on VAR dataset ($d=10,d_{c}=4,L=2$).}
\label{tab:ablation-var}
\begin{tabular}{@{}lll@{}}
\toprule
\multicolumn{1}{c}{MuLTI} & \multicolumn{1}{c}{$R^2~(\%)$} & MCC            \\ \midrule
w/o Causal transition     & 44.67{\tiny$\pm$2.31}            & 33.47{\tiny$\pm$7.60} \\
w/o Permutation learning      & 99.32{\tiny$\pm$0.07}            & 77.45{\tiny$\pm$0.47} \\
w/o Residuals minimization   & 99.48{\tiny$\pm$0.04}            & 82.32{\tiny$\pm$2.63} \\
w/o Noise discriminator      & 99.23{\tiny$\pm$0.05}            & 98.32{\tiny$\pm$0.27} \\ \bottomrule
\end{tabular}
\end{table}

\paragraph{Module ablations}
To verify the effectiveness of each module within MuLTI, we conduct ablation studies by individually removing each module.
The results tested on the Multi-view VAR dataset are shown in Table~\ref{tab:ablation-var}.
When the causal transition module is removed, positive pairs are formed from temporally adjacent states, \textit{i.e.}, $(\hat{\bm{z}}_{t},\hat{\bm{z}}_{t+1})$.
In the setting of removing permutation learning, the view-specific variables are simply concatenated along the feature dimension and mapped with a linear layer to match the dimension of the transition function.
Results indicate the crucial role of the causal transition function in identifying the spatial-temporal dependent latent process.
As shown, permutation learning also greatly contributes to the identifiability under this multi-view setting.
Furthermore, Figure~\ref{fig:exp-ball-corr} from the Mass-spring system experiment reveals that, without permutation learning, only partial identification of the latent variables is possible.

\paragraph{Effect of noise and distance metrics}
To evaluate the effect of noise distribution type and distance metric $\delta(\cdot,\cdot)$, we conduct experiments under different combinations. The results shown in Table~\ref{tab:metric-ablation} indicate that the Laplacian noise condition accompanied with $\ell_{1}$ norm is key to identifying the ground-truth latent processes up to permutation equivalence.
Nevertheless, in most conditions, we can still identify the latent process up to linear equivalence.
\begin{table}[tbp]
\centering
\caption{Identifiability with different noise prior \& metric. Note that the \emph{Identity} indicates the untrained model, and \emph{Supervised} indicates the model trained with MSE between ground-truth and estimated latent variables. Mean+standard deviation over 5 random seeds.}
\label{tab:metric-ablation}
\begin{tabular}{@{}lccc@{}}
\toprule
Noise   type             & $\delta(\cdot,\cdot)$ & $R^{2} ~(\%)$        & MCC            \\ \midrule
Identity                 & /                     & 66.12{\tiny$\pm$2.77} & 43.91{\tiny$\pm$2.56} \\
Supervised               & /                     & 99.78{\tiny$\pm$0.07} & 99.92{\tiny$\pm$0.02} \\ \midrule
Normal ($\sigma=0.1$)    & $\ell_{2}$                    & 99.72{\tiny$\pm$0.03} & 67.21{\tiny$\pm$1.41} \\
Laplace ($\lambda=0.1$)  & $\ell_{2}$                    & 99.71{\tiny$\pm$0.02} & 66.74{\tiny$\pm$0.21} \\
Normal ($\sigma=0.1$)    & $\ell_{1}$                    & 99.69{\tiny$\pm$0.04} & 70.52{\tiny$\pm$0.27} \\
Laplace ($\lambda=0.1$)  & $\ell_{1}$                    & 99.74{\tiny$\pm$0.02} & 99.31{\tiny$\pm$0.07} \\
Laplace ($\lambda=0.05$) & $\ell_{1}$                    & 99.77{\tiny$\pm$0.03} & 99.81{\tiny$\pm$0.02} \\ \bottomrule
\end{tabular}
\end{table}

\section{Related Works}

\paragraph{Multi-view representation learning}
Many realistic conditions involve data from multiple channels and domains; these data must be analyzed together in order to obtain a reliable estimation of latent relations, \textit{e.g.} neurophysiology and behavior time series data \cite{uraiLargescaleNeuralRecordings2022}.
Typical methods of learning share representations from multi-view data include Canonical Correlation Analysis (CCA) \cite{hotellingRelationsTwoSets1992} and its variants \cite{akaho2006kernel,andrew2013deep,laiKernelNonlinearCanonical2000,wangDeepMultiViewRepresentation,zhao2017multi}, but most of these methods only consider learning the common source of each view.
More recently, some improvements are done in dividing the sources into common and private components \cite{lyu2021understanding,huang2018multimodal,hwang2020variational,luoDifferentiableHierarchicalOptimal2022}.
However, it is not the case when there are relations (causal influences) between source components, due to most of them relying on independent source assumptions \cite{greseleIncompleteRosettaStone2020}.
In contrast, our work seeks to recover latent variables when there are causal relations between them.

\paragraph{Identifiability}
Identifiability means the estimated latent variables are exactly equivalent to the underlying ones or at least up to simple transformation. Traditional independent component analysis (ICA) \cite{belouchrani1997blind} assumes sources are linear mixed, but such linearity relation may not hold in many applications. To overcome this limitation, the non-linear ICA (nICA) \cite{hyvarinenNonlinearIndependentComponent1999} has attracted great attention.
A variety of practical methods \cite{hyvarinenNonlinearICATemporally2017,hyvarinenNonlinearICAUsing2019,khemakhemVariationalAutoencodersNonlinear2020,sorrensonDisentanglementNonlinearICA2022} have been proposed to achieve identifiability through discriminating data structures, such as manually constructing positive and negative pairs and training models to produce features to distinguish whether input data are distorted, such as temporal structure \cite{DBLP:conf/nips/HyvarinenM16} and randomized distorted data pairs \cite{hyvarinenNonlinearICAUsing2019}. 
More recently, \citeauthor{greseleIncompleteRosettaStone2020} adopt similar practices into multi-view data, but still limited in source independent assumptions. \citeauthor{yaoLearningTemporallyCausal2021} make attempts on dependent sources but their method can only handle single-view data.
In this work, we explore a general case that identifies dependent processes from multi-view temporal data.

\section{Conclusion}
In this work, we make the first attempt toward a general setting of the multi-view latent process identification problem and propose a novel MuLTI framework based on contrastive learning. The key idea is to identify the latent variables and dependent structure through the construction of conditional distribution and aggregating variables across views.
Empirical results suggest that our methods can successfully identify the spatial-temporal dependent latent process from multi-view time series observations.
We hope our work can raise more attention to the importance of identifying and aggregating latent factors in understanding multi-view data.

\section{Acknowledgments}
This work is supported by the National Key R\&D Program of China (2020YFB1313501), National Natural Science Foundation of China (T2293723, 61972347), Zhejiang Provincial Natural Science Foundation (LR19F020005), the Key R\&D program of Zhejiang Province (2021C03003, 2022C01119, 2022C01022), the Fundamental Research Funds for the Central Universities (No. 226-2022-00051).
JZ also want to thank the support by the NSFC Grants (No. 62206247) and the Fundamental Research Funds for the Central Universities.

\bibliographystyle{named}
\bibliography{ijcai23}

\clearpage
\appendix
\counterwithin{theorem}{section}
\counterwithin{table}{section}
\counterwithin{figure}{section}

\section{Notations and Terminology}
\label{app:notations}

\begin{table}[h!]
\centering
\caption{List of notations.}
\begin{tabular}{p{0.1\linewidth}p{0.8\linewidth}}
\toprule
\multicolumn{2}{c}{\textbf{Index}} \\
$i,j$ & Variable element (vector/matrix element) index \\
$t$   & Time step \\
$\tau$ & Time delay step \\
$v$    & View index \\
$H_{t}$ & A set of indices represent the previous time steps back from the $t$-step \\
\multicolumn{2}{c}{\textbf{Variable}} \\
$\bm{x}^{v}_{t}$ & Observations of $v$-th view \\
$\bm{z}_{t}$ & Ground-truth latent source variable at $t$-step \\
$\bm{z}^{v}_{t}$ & Ground-truth latent source variable of $v$-th view \\
$\epsilon_{t,i}$ & Mutually independent exogenous noise variable in generating $z_{t,i}$ \\
$\hat{\bm{z}}_{t}$ & Reconstructed factors by $r(\cdot)$ \\
$\tilde{\bm{z}}_{t}$ & Positive pair of $\bm{z}_{t}$ \\
$\textbf{Pa}(\cdot)$ & Set of direct cause nodes/parents of variable $z_{t,i}$ \\
$\bm{A}_{\tau}$ & Transition matrix with $\tau$ steps delay \\
$\mathfrak{B}_{d}$ & $d$-dimensions Brikoff polytope \\
$\bm{B}^{v}$  & $\bm{B}^{v}\in \mathfrak{B}_{d_{v}}$ is the doubly stochastic matrix with $d_{v}$-dimensions \\
$\mu$ & Temperature constant for contrastive learning \\
$\eta$ & Doubly stochastic matrix entropy penalty scale factor \\

\multicolumn{2}{c}{\textbf{Function and Hyper-parameter}} \\
$g$ & Potential invertible function which control the $v_{i}$ generating \\
$r$ & Function which encodes data to latent factors \\
$h$ & $h=r\circ g:\mathcal{Z}\mapsto \mathcal{Z}$ the transformation between ground-truth and estimated factors \\
$f$ & Causal transition function which encodes time-delayed causal relations between factors \\

\bottomrule
\end{tabular}
\end{table}

\section{Permutation Learning for the Merge Operation}
\label{app:permuation_expansion}

In this section, we briefly introduce the optimization of permutation learning.
As indeterminacy exists in encoded view-specific variables, permutation learning is a preliminary step towards merging operations by searching for the order correspondence between components across views.

Recall that we adopt an alternative optimization strategy to optimize the overall objective. Specifically, to perform permutation learning, we freeze other modules of the model to optimize the permutation matrices and the resultant objective is as follows:

\begin{equation*}
    \min_{\bm{B}^{v}}\mathbb{E}\left[\|(\bm{B}^{v}\bm{z}^{v}_{t})_{1:d_{c}}-\bm{c}^{*}_{t}\|^{2}\right] + \eta R(\bm{B}^{v}),
    \label{eq:supp-merge-obj}
\end{equation*}
where $R(\bm{B}^{v})=-\sum_{i,j} B_{i,j}\log B_{i,j}$, and $\bm{c}^{*}_{t}$ can be regarded as a mean variable of all shared components.
Technically, we keep a set of trainable matrices $\{\bm{B}^{v}\}$ that correspond to the dimension permutations for each view-specific encoder.
At each permutation matrix optimization step, MuLTI searches for top-$d_{c}$ most relevant components between view-specific latent variables.
Note that if the component is correctly ordered, each shared components are able to match each other, which leads to a low variance in this dimension. Hence, optimizing Eq.~(\ref{eq:max_cca}) induces a set of proper permutation matrices for merging the latent factors from different views. 

The approximate permutation matrices $\{\bm{B}^{v}\}$ should follow some constraints in order to prevent the degradation of permutation learning, namely, to keep every $\bm{B}^{v}$ within the Birkhoff Polytope at the whole procedure.
Let $\mathcal{B}_{d}$ denote the $d$-Birkhoff polytope, i.e., the set of doubly stochastic matrices of dimension $d$, and $\mathcal{P}_{d}$ denote the set of permutation matrices of dimension $d$,
\begin{align*}
    \mathcal{B}_{d}=\{\bm{B}\in [0, 1]^{d\times d} ,\bm{B}\bm{1}_{d}=\bm{1}_{d},\bm{B}^{\top}\bm{1}_{d}=\bm{1}_{d}\},\\
    \mathcal{P}_{d}=\{\bm{B}\in \{0, 1\}^{d\times d},\bm{B}\bm{1}_{d}=\bm{1}_{d},\bm{B}^{\top}\bm{1}_{d}=\bm{1}_{d} \},
\end{align*}
where $\mathcal{P}_{d}$ indicates rigorous permutation, and $\mathcal{B}_{d}$ indicates the relaxed version of the former. While our ultimate goal is to obtain a binary permutation matrix in $\mathcal{P}_{d}$, the objective can be hard to be resolved due to the discrete constraints. In practice, we search for a relaxed optimizer in $\mathcal{B}_{d}$ to optimize Eq.(15) and finally binarize the obtained matrix to obtain the desired solution. 

We decompose the optimization procedure into two steps. First, the standard stochastic gradient algorithm is applied to optimize Eq.(15), where $\bm{B}^v$s are searched in the $\mathbb{R}^n$ space without considering the constraints. Next, denote the obtained matrix by $\bm{M}$, we introduce the Sinkhorn-Knopp method to transform it to a doubly stochastic matrix in $\mathcal{B}_d$.
Let $S(\cdot)$ denote the Sinkhorn operator. The updating rule of the Sinkhorn-Knopp algorithm is as follows,

\begin{align*}
    S^{0}(\bm{M})&=\exp(\bm{M}),\\
    S^{l}(\bm{M})&=\mathcal{T}_{\text{col}}\left(\mathcal{T}_{\text{row}}(S^{l-1}(\bm{M}))\right),\\
    S(\bm{M})&=\lim_{l\rightarrow \infty}S^{l}(\bm{M}),
\end{align*}
where $\mathcal{T}_{\text{row}}(\bm{M})=\bm{M}\oslash (\bm{M}\bm{1}_{d}\bm{1}^{\top}_{d})$, and $\mathcal{T}_{\text{col}}(\bm{M})=\bm{M}\oslash(\bm{1}_{d}\bm{1}^{\top}_{d}\bm{M})$ as row and column-wise normalization operators of matrix $\bm{M}$, with $\oslash$ denoting the element-wise division and $\bm{1}_{d}$ a $d$ dimensions vectors of ones.
This operator has been proven that $S(\bm{M})$ must belong to the Birkhoff polytope $\mathcal{B}_{d}$ \citeapp{kuhn1955hungarian}.
In practice, we iterate up to $100$ times to obtain an approximate doubly stochastic matrix. This number of iterations is usually around $20\sim 100$ to reach a satisfactory approximation.
The whole Sinkhorn procedure is similar to the projected gradient descent algorithm.

At each permutation matrices freezing step, i.e., training the other modules of the model with permutation matrices fixed, we binarize $\{\bm{B}^{v}\}$ to obtain the closest true permutation matrix.
Specifically, we fetch a rigorous permutation from a $\bm{B}^{v}$ by solving:
\begin{equation}
    \bm{P}^{v}=\underset{\bm{P}\in\mathcal{P}_{d_{v}}}{\arg\max}\langle \bm{P},\bm{B}^{v}\rangle_{F}.
\end{equation}
Then, we apply $\{\bm{P}^{v}\}$ to each view-specific variable for obtaining the top-$d_{c}$ dimensions of each variable as common components across views.

The core of permutation learning is to implement trainable permutation matrices that ``sort" the dimensions of each view-specific variable and assure common components in the same order.
So we can select top-$d_{c}$ dimensions of each view-specific variable and obtain the common variable $\bm{c}^{*}_{t}$.

\section{Proofs}
\label{app:proofs}

\subsection{Assumptions}
\paragraph{Data-generating Process}
Let $\mathcal{Z}\in \mathbb{R}^{d}$ be the original source space and $\{\mathcal{X}^{v} \in \mathbb{R}^{D_{v}}\}$ be a set of observational spaces.
We assume that the marginal distribution $p(\bm{z}_{t})$ where $\bm{z}_{t}\in \mathcal{Z}$ is uniform:
\begin{equation}
    p(\bm{z}_{t}) = \frac{1}{|\mathcal{Z}|}.
\end{equation}
And there are series of injective functions $g^{v}:\mathcal{Z}^{v}\mapsto \mathcal{X}^{v}$ where $\cup\mathcal{Z}^{v}=\mathcal{Z}$.

Combining the causal transition function $f$ which generates the latent variable $\bm{z}_{t}$ from the previous states
\begin{equation}
    \bm{z}_{t} = f(\textbf{Pa}(\bm{z}_{t}),\bm{\epsilon}_{t}),
\end{equation}
where $\bm{\epsilon}_{t}$ is a mutual independent noise with zero mean.
From the parameters estimation perspective, $\bm{\epsilon}_{t}$ is independent of the parameters of $f$.
Hence, we denote $\tilde{\bm{z}}_{t}$ as a recursion of noiseless causal transition $\tilde{\bm{z}}_{t}=f(\textbf{Pa}(\bm{z}_{t}))$ which is also assumed to be uniformly distributed with $p(\tilde{\bm{z}}_{t})=|\mathcal{Z}|^{-1}$.
Note, the $\tilde{\bm{z}}_{t}$ and $\bm{z}_{t}$ differ by the noise term $\bm{\epsilon}_{t}$, therefore we can model the conditional distribution as:
\begin{equation}
    \begin{split}
    p(\tilde{\bm{z}}_{t}|\bm{z}_{t}) & = C^{-1}_{p}e^{-\delta(\bm{z}_{t},\tilde{\bm{z}}_{t})} \\
    \mathrm{with}\quad C_{p}(\bm{z}_{t}) & := \int e^{-\delta(\bm{z}_{t},\tilde{\bm{z}}_{t})}d\tilde{\bm{z}}_{t},
    \end{split}
\end{equation}
where $\delta(\cdot, \cdot)$ is a semi-metric.

\paragraph{Model}
Let $r:\{\mathcal{X}^{v}\}\mapsto \mathcal{Z}^{v}$, where reverse function $r$ is an abbreviation for a set of function $r^{v}$ along with the merge operator.
The reverse function encodes the observational data from different views into a unified representation $\bm{z}_{t}$.
We use $h=r\circ g$ to represent the mapping between original space $\mathcal{Z}$ and the estimated version.
The modeled conditional distribution is expressed with $h$ as $q_{h}(\tilde{\bm{z}}_{t}|\bm{z}_{t})$ and
\begin{equation}
    \begin{split}
    q_{h}(\tilde{\bm{z}}_{t}|\bm{z}_{t})= & C^{-1}_{h}(\bm{z}_{t})e^{-\delta(h(\tilde{\bm{z}}_{t}),h(\bm{z}_{t}))/\mu} \\
    \mathrm{with}\quad C_{h}(\bm{z}_{t}):= & \int e^{-\delta(h(\bm{z}_{t}),h(\tilde{\bm{z}}_{t}))/\mu}d\tilde{\bm{z}}_{t},
    \end{split}
\end{equation}
where $C_{q}(\bm{z}_{t})$ is the partition function and $\mu > 0$ is a scale constant.

For modeling $\tilde{\bm{z}}_{t}$ with the causal transition function, we introduce the noiseless parametric causal transition:
\begin{align}
    \tilde{\bm{z}}_{t} & =f(\bm{z}_{H_{t}})=\sum\nolimits^{L}_{\tau=1}\bm{A}_{\tau}\bm{z}_{t-\tau},
\end{align}

\subsection{Proof of Theorem \ref*{theorem:ce}}

First, we introduce the following lemma extended from \citeapp{zimmermannContrastiveLearningInverts2021} to show the consistency of the connection between the cross-entropy and the contrastive loss.

\begin{lemma}[$\mathcal{L}_{\rm contr}$ converges to the cross-entropy between latent distribution]\label{lemma:contrastive_ce}
    If the ground-truth marginal distribution $p$ is uniform, then for fixed $\mu > 0$, as the number of negative samples $M\to +\infty$, the (normalized) contrastive loss converges to
    \begin{equation}
        \begin{split}
        \lim_{M\to +\infty} \mathcal{L}_{\rm contr}(r;\mu,M)-\log M + \log|\mathcal{Z}|=\\
        \underset{\bm{z}_{t}\sim p(\bm{z}_{t})}{\mathbb{E}}[H(p(\cdot|\bm{z}_{t}),q_{h}(\cdot|\bm{z}_{t}))],
        \end{split}
    \end{equation}
    where $H$ is the cross-entropy between the ground-truth conditional distribution $p$ over positive pairs and a conditional distribution $q_{h}$ parameterized by the model $r$, and $C_{h}(\bm{z}_{t})\in \mathbb{R}^{+}$ is the partition function of $q_{h}$:
    \begin{equation}
        \begin{split}
            q_{h}(\tilde{\bm{z}}_{t}|\bm{z}_{t}) & = C_{h}(\bm{z}_{t})^{-1}e^{-\delta(h(\tilde{\bm{z}}_{t}),h(\bm{z}_{t}))/\mu} \\
            \mathrm{with} \quad C_{h}(\bm{z}_{t}) & := \int e^{-\delta(h(\tilde{\bm{z}}_{t}),h(\bm{z}_{t}))/\mu}.
        \end{split}
    \end{equation}
\end{lemma}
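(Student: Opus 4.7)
}
The plan is to rewrite $\mathcal{L}_{\rm contr}$ by splitting the logarithm into a numerator and a denominator contribution, identify the denominator as an empirical average of $M$ i.i.d.\ samples, invoke the law of large numbers to replace it with its expectation, and finally match the resulting expression with the definition of $q_h$ so that the $\log C_h(\bm{z}_t)$ term reconstitutes the cross-entropy. Throughout, the uniformity of $p(\bm{z}_t)$ will be essential for turning the expectation of $e^{-\delta/\mu}$ over $p_{\rm data}$ into $C_h(\bm{z}_t)/|\mathcal{Z}|$, which in turn is what makes the $-\log M + \log|\mathcal{Z}|$ counterterm exactly the right normalization.

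First, I would substitute $\hat{\bm{z}}_t=h(\bm{z}_t)$ and write
\begin{align*}
\mathcal{L}_{\rm contr}(r;\mu,M) &= \mathbb{E}\bigl[\,\delta(h(\bm{z}_t),h(\tilde{\bm{z}}_t))/\mu\bigr] \\
&\quad + \mathbb{E}\!\left[\log\!\Bigl(e^{-\delta(h(\bm{z}_t),h(\tilde{\bm{z}}_t))/\mu} + \textstyle\sum_{i=1}^{M} e^{-\delta(h(\bm{z}_t),h(\bm{z}_i^-))/\mu}\Bigr)\right].
\end{align*}
The first term will later combine with $\log C_h(\bm{z}_t)$ to form $-\log q_h(\tilde{\bm{z}}_t\mid \bm{z}_t)$. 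For the second term, I conditionally fix $\bm{z}_t$ and write the argument of the log as $M\cdot \bigl(\tfrac{1}{M}\sum_{i=1}^M e^{-\delta(h(\bm{z}_t),h(\bm{z}_i^-))/\mu}\bigr) + o(M)$, where the $o(M)$ accounts for the single positive-sample exponential, which is bounded by $1$ and hence vanishes after dividing by $M$ inside the log.

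Second, with $\bm{z}_i^-\sim p_{\rm data}$ and $p(\bm{z})=1/|\mathcal{Z}|$, the strong law of large numbers gives
\begin{equation*}
\tfrac{1}{M}\sum_{i=1}^{M} e^{-\delta(h(\bm{z}_t),h(\bm{z}_i^-))/\mu} \xrightarrow[M\to\infty]{\text{a.s.}} \int \frac{1}{|\mathcal{Z}|}\, e^{-\delta(h(\bm{z}_t),h(\bm{z}^-))/\mu}\, d\bm{z}^- = \frac{C_h(\bm{z}_t)}{|\mathcal{Z}|}.
\end{equation*}
Taking logarithms, the denominator contribution becomes $\log M + \log C_h(\bm{z}_t) - \log|\mathcal{Z}| + o(1)$. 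Subtracting $\log M$ and adding $\log|\mathcal{Z}|$ leaves precisely $\log C_h(\bm{z}_t)$. Combined with the first term, I then get
\begin{equation*}
\lim_{M\to\infty}\bigl(\mathcal{L}_{\rm contr} - \log M + \log|\mathcal{Z}|\bigr) = \mathbb{E}_{\bm{z}_t,\tilde{\bm{z}}_t}\!\bigl[\delta(h(\bm{z}_t),h(\tilde{\bm{z}}_t))/\mu + \log C_h(\bm{z}_t)\bigr],
\end{equation*}
and by the definition of $q_h$ the bracket equals $-\log q_h(\tilde{\bm{z}}_t\mid \bm{z}_t)$, whose expectation over $\tilde{\bm{z}}_t\sim p(\cdot\mid \bm{z}_t)$ is $H(p(\cdot\mid \bm{z}_t)\,\|\,q_h(\cdot\mid \bm{z}_t))$, yielding the claim.

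The main obstacle I anticipate is justifying the interchange of the limit with the outer expectation over $\bm{z}_t$ and $\tilde{\bm{z}}_t$. The almost-sure convergence above holds pointwise in $\bm{z}_t$, but passing it under $\mathbb{E}$ requires a uniform integrability argument. I would handle this via dominated convergence: since $\mathcal{Z}$ is a convex body (bounded) and $\delta$ is induced by a norm, $e^{-\delta/\mu}\in (0,1]$, so the per-sample integrand $\log\!\bigl(\tfrac{1}{M}\sum_i e^{-\delta/\mu}\bigr)$ is bounded above by $0$ and bounded below (in expectation) uniformly in $M$ by a constant depending only on $\mu$ and $\mathrm{diam}(h(\mathcal{Z}))$. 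This uniform bound, together with continuity of $\log$ away from $0$ and the fact that $C_h(\bm{z}_t)/|\mathcal{Z}|$ is bounded away from $0$ (because $h$ maps a bounded set into itself and $\delta$ is continuous), permits the exchange. Once that technicality is cleared, the remaining steps are algebraic identification with the cross-entropy, which is routine.
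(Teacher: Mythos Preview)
Your proposal is correct and follows essentially the same decomposition as the paper: split the log-ratio into the alignment term $\delta(h(\bm{z}_t),h(\tilde{\bm{z}}_t))/\mu$ and the log-partition term, use the uniform marginal to rewrite the integral defining $C_h(\bm{z}_t)$ as $|\mathcal{Z}|\cdot\mathbb{E}_{\tilde{\bm{z}}_t\sim p}[e^{-\delta/\mu}]$, and match the result with $-\log q_h$. The only substantive difference is directional and in the handling of the asymptotics: the paper starts from the cross-entropy side, expands $\log C_h(\bm{z}_t)$, inserts the $|\mathcal{Z}|/|\mathcal{Z}|$ factor, and then appeals to the asymptotic form of $\mathcal{L}_{\rm contr}$ established by Wang and Isola (2020) to close the identification; you instead start from $\mathcal{L}_{\rm contr}$, derive the $M\to\infty$ limit yourself via the strong law of large numbers, and explicitly justify the interchange of limit and outer expectation by dominated convergence on the bounded convex body $\mathcal{Z}$. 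Your route is more self-contained and makes the role of the boundedness of $\mathcal{Z}$ and of $\delta$ transparent, whereas the paper's route is shorter but leans on an external reference for exactly the step you work out by hand.
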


\begin{proof}
    The cross-entropy between the conditional distribution $p$ and $q_{h}$ is given by
    \begin{align}
        & \underset{\bm{z}_{t}\sim p(\bm{z}_{t})}{\mathbb{E}}\left[H(p(\cdot|\bm{z}_{t}), q_{h}(\cdot|\bm{z}_{t}))\right] \\
        =& \underset{\bm{z}_{t}\sim p(\bm{z}_{t})}{\mathbb{E}} \left[\mathbb{E}_{\tilde{\bm{z}}_{t}\sim p(\tilde{\bm{z}}_{t}|\bm{z}_{t})}\left[-\log q_{h}(\tilde{\bm{z}}_{t}|\bm{z}_{t})\right]\right] \\
        =& \underset{\tilde{\bm{z}}_{t},\bm{z}_{t}\sim p(\tilde{\bm{z}}_{t},\bm{z}_{t})}{\mathbb{E}}\left[\frac{1}{\mu}\delta(h(\tilde{\bm{z}}_{t}), h(\bm{z}_{t})) + \log C_{h}(\bm{z}_{t})\right] \\
        =& \underset{\bm{z}_{t}\sim p(\bm{z}_{t})}{\mathbb{E}} \left[\frac{1}{\mu} \underset{\tilde{\bm{z}}_{t}\sim p(\tilde{\bm{z}}_{t}|\bm{z}_{t})}{\mathbb{E}}\left[\delta(h(\tilde{\bm{z}_{t}}),h(\bm{z}_{t}))\right]+\log C_{h}(\bm{z}_{t})\right],
    \end{align}
    which can be written as
    \begin{equation}
        \frac{1}{\mu}\underset{\substack{\bm{z}_{t} \sim p(\bm{z}_{t}) \\ \tilde{\bm{z}}_{t}\sim p(\tilde{\bm{z}}_{t} | \bm{z}_{t})}}{\mathbb{E}}[\delta(h(\tilde{\bm{z}}_{t}),h(\bm{z}_{t}))] + \underset{\substack{\bm{z}_{t} \sim p(\bm{z}_{t}) \\ \tilde{\bm{z}}_{t} \sim p(\tilde{\bm{z}}_{t}|\bm{z}_{t})}}{\mathbb{E}}[\log C_{h}(\bm{z}_{t})].
    \end{equation}
    Inserting the definition of $C_{h}$ gives
    \begin{align}
        =&\frac{1}{\mu}\underset{\substack{\bm{z}_{t} \sim p(\bm{z}_{t}) \\ \tilde{\bm{z}}_{t}\sim p(\tilde{\bm{z}}_{t} | \bm{z}_{t})}}{\mathbb{E}}[\delta(h(\tilde{\bm{z}}_{t}),h(\bm{z}_{t}))] \\
        &+ \underset{\bm{z}_{t}\sim p(\bm{z}_{t})}{\mathbb{E}}\left[\log \left(\int_{\mathcal{Z}} e^{-\delta(h(\tilde{\bm{z}}_{t}),h(\bm{z}_{t}))/\mu}d\tilde{\bm{z}}_{t}\right)\right].
    \end{align}
    Next, the second term can be expaned by $|\mathcal{Z}||\mathcal{Z}|^{-1}$, yielding
    \begin{align}
        =&\frac{1}{\mu}\underset{\substack{\bm{z}_{t} \sim p(\bm{z}_{t}) \\ \tilde{\bm{z}}_{t}\sim p(\tilde{\bm{z}}_{t} | \bm{z}_{t})}}{\mathbb{E}}[\delta(h(\tilde{\bm{z}}_{t}),h(\bm{z}_{t}))]\\
        & + \underset{\bm{z}_{t}\sim p(\bm{z}_{t})}{\mathbb{E}}\left[\log \left(\int_{\mathcal{Z}} \frac{|\mathcal{Z}|}{|\mathcal{Z}|} e^{-\delta(h(\tilde{\bm{z}}_{t}),h(\bm{z}_{t}))/\mu}d\tilde{\bm{z}}_{t}\right)\right].
    \end{align}
    Finally, by using asymptotics of $\mathcal{L}_{\rm contr}$ derived by \citeapp{wangUnderstandingContrastiveRepresentation2020}
    and the marginal is uniform, i.e., $p(\tilde{\bm{z}}_{t})=|\mathcal{Z}|^{-1}$, this can be simplified as
    \begin{align}
        =&\frac{1}{\mu}\underset{\substack{\bm{z}_{t} \sim p(\bm{z}_{t}) \\ \tilde{\bm{z}}_{t}\sim p(\tilde{\bm{z}}_{t} | \bm{z}_{t})}}{\mathbb{E}}[\delta(h(\tilde{\bm{z}}_{t}),h(\bm{z}_{t}))]\\
        & + \underset{\bm{z}_{t}\sim p(\bm{z}_{t})}{\mathbb{E}}\left[\log \left(\underset{\tilde{\bm{z}}_{t}\sim p(\tilde{\bm{z}}_{t})}{\mathbb{E}} \left[ e^{-\delta(h(\tilde{\bm{z}}_{t}),h(\bm{z}_{t}))/\mu}\right] \right)\right]\\
        & + \log |\mathcal{Z}| \nonumber.\\
        =&\lim_{M\to +\infty}\mathcal{L}_{\rm contr}(r;\mu,M)-\log M + \log p|\mathcal{Z}|.
    \end{align}
\end{proof}

\begin{customtheorem}{\ref*{theorem:ce}}[Minimizers of cross-entropy recover the latent variables and their relations]
    Let latent space $\mathcal{Z}$ be a convex body in $\mathbb{R}^{d}$, $h=r\circ g:\mathcal{Z}\mapsto\mathcal{Z}$, $f$ be a causal transition function, and $\delta$ be a metric, induced by a norm. If $g$ is differentiable and injective, $r,f$ are expressive enough to be minimizers of contrastive objective $\mathcal{L}_{\rm contr}$ in Eq.~(\ref{eq:loss_contr}) for $M\to +\infty$, then, we have $h=r\circ g$ is invertible and affine mapping and $f$ captures the ground-truth causal relations.
\end{customtheorem}

\begin{proof}
    According to the Lemma~\ref{lemma:contrastive_ce}, once the positive pairs are given by $\tilde{\bm{z}}_{t}=f(\bm{z}_{H_{t}},\bm{\epsilon}_{t})$, the cross-entropy between the conditional distribution $p$ and $q_{h,f}$ is given by,
    \begin{align}
        \lim_{M\to +\infty} \mathcal{L}_{\rm contr}(r,f;\mu,M)-\log M + \log|\mathcal{Z}|= \nonumber \\
        \underset{\bm{z}_{t}\sim p(\bm{z}_{t})}{\mathbb{E}}[H(p(\cdot|\bm{z}_{t}),q_{h,f}(\cdot|\bm{z}_{t}))],
    \end{align}
    Note that $q_{h,f}(\tilde{\bm{z}}_{t}|\bm{z}_{t})$ is powerful enough to match $p(\tilde{\bm{z}}_{t}|\bm{z}_{t})$ for the correct choice of $h$ and $f$ on any point $\bm{z}_{t}$, e.g., the isometry.
    If $h$ and $f$ is minimizers of the cross-entropy $\mathbb{E}_{p(\tilde{\bm{z}}_{t}|\bm{z}_{t})}[-\log q_{h,f}(\tilde{\bm{z}}_{t}|\bm{z}_{t})]$, then we have
    \begin{equation}
        p(\tilde{\bm{z}}_{t}|\bm{z}_{t})=q_{h,f}(\tilde{\bm{z}}_{t}|\bm{z}_{t}).
    \end{equation}
    The distance metric $\delta(\cdot, \cdot)$ used in the prior conditional distribution $p$ and parameterized conditional distribution $q_{h,f}$:
    \begin{align}
        p(\tilde{\bm{z}}_{t}|\bm{z}_{t}) & =q_{h,f}(\tilde{\bm{z}}_{t}|\bm{z}_{t})\\
        \Leftrightarrow C^{-1}_{p}(\bm{z}_{t})e^{-\delta(\tilde{\bm{z}}_{t},\bm{z}_{t})} & = C^{-1}_{h}(\bm{z}_{t})e^{-\delta(h(\tilde{\bm{z}}_{t}),h(\bm{z}_{t}))/\mu} \\
        \Leftrightarrow C_{p}(\bm{z}_{t}) & = C_{h}(\bm{z}_{t}).
    \end{align}

    By expanding the formulation of $f$ with the parametric causal transition as:
    \begin{equation}
        f(\bm{z}_{H_{t}},\bm{\epsilon}_{t}) =\sum^{L}_{\tau=1}\bm{A}_{\tau}\bm{z}_{t-\tau}+\bm{\epsilon}_{t},
    \end{equation}
    and take it into the definition of conditional distribution:
    \begin{align}
        p(\tilde{\bm{z}}_{t}|\bm{z}_{t}) & =C^{-1}_{p}(\bm{z}_{t})e^{-\delta(\tilde{\bm{z}}_{t},\bm{z}_{t})} \\
        & = C^{-1}_{p}(\bm{z}_{t})e^{-\delta(f(\bm{z}_{H_{t}}),\bm{z}_{t})},
    \end{align}
    similarly, the definition of parameterized conditional distribution is:
    \begin{align}
        q_{h,f}(\tilde{\bm{z}}_{t}|\bm{z}_{t}) & =C^{-1}_{h}(\bm{z}_{t})e^{-\delta(h(\tilde{\bm{z}}_{t}),h(\bm{z}_{t}))/\mu} \\
        & = C^{-1}_{h}(\bm{z}_{t})e^{-\delta(f(h(\bm{z}_{H_{t}})),h(\bm{z}_{t}))/\mu}.
    \end{align}
    As the marginal normalization constants are identical, we obtain for all $\bm{z}_{t}\in \mathcal{Z}$
    \begin{align}
        e^{-\delta(\tilde{\bm{z}}_{t},\bm{z}_{t})} & = e^{-\delta(h(\tilde{\bm{z}}_{t}),h(\bm{z}_{t})) / \mu} \\
        \Leftrightarrow e^{-\delta(f(\bm{z}_{H_{t}}),\bm{z}_{t})} & = e^{-\delta(f(h(\bm{z}_{H_{t}})), h(\bm{z}_{t}))},
    \end{align}
    then, by the definition of $f$ which is parameterized by a series of full-rank matrices, it is bijective. 
    And $\tilde{\bm{z}}_{t}=f(\bm{z}_{H_{t}})$, hence, $q_{h,f}(\tilde{\bm{z}}_{t} | \bm{z}_{t}) = q_{h,f}(\bm{z}_{t}|\bm{z}_{H_{t}})$ we can derive the conditional distribution as:
    \begin{align}
        p(\bm{z}_{t}|\bm{z}_{H_{t}}) & = q_{h,f}(\bm{z}_{t}|\bm{z}_{H_{t}}) \\
        \Leftrightarrow \delta(f(\bm{z}_{H_{t}}),\bm{z}_{t}) & = \frac{1}{\mu}\delta(f(h(\bm{z}_{H_{t}})), h(\bm{z}_{t})) \\
        \Leftrightarrow \|\bm{\epsilon}_{t}\|_{\alpha} & = \frac{1}{\mu}\|h(\bm{\epsilon}_{t})\|_{\alpha}.
    \end{align}
    This means that $h$ preserves the $\ell_{\alpha}$-distance between points.
    Moreover, because $h$ is bijective, the Mazur-Ulam theorem tells us that $h$ must be an affine transform, i.e., by choosing proper $\mu$, there is an orthogonal matrix $\bm{U}\in \mathbb{R}^{d\times d}$ that $h(\bm{\epsilon}_{t})=\bm{U}\bm{\epsilon}_{t}$ and $h(\bm{z}_{t})=\bm{U}\bm{z}_{t}$.
    Define the delay operator $z^{-\tau}$, and extend the multichannel blind deconvolution from 
    \begin{equation}\label{eq:mbd}
        \bm{\epsilon}_{t} = \bm{z}_{t} - \tilde{\bm{z}}_{t} = \left[\bm{I}-\sum^{L}_{\tau=1}\bm{A}_{\tau}z^{-\tau}\right]\bm{z}_{t}
    \end{equation}
    take $\bm{z}_{t}=\bm{U}^{\top}h(\bm{z}_{t})$ and exchange the $\bm{\epsilon}_{t}$ and $\bm{z}_{t}$ in Eq.~(\ref{eq:mbd}), then, by extending \citeapp[Lemma 1]{zhangGeneralLinearNonGaussian2011} we have following equivalence:
    \begin{align}
        \bm{z}_{t} & = \left[\left(\bm{I}-\sum^{L}_{\tau=1}\bm{A}_{\tau}z^{-\tau}\right)^{-1}\right]\bm{\epsilon}_{t} \\ 
        & = \bm{U}^{\top}\left[\left(\bm{I}-\sum^{L}_{\tau=1}\bm{U}\bm{A}_{\tau}\bm{U}^{\top}z^{-\tau}\right)^{-1}\right]h(\bm{\epsilon}_{t}) \\
        & = \left[\left(\bm{I}-\sum^{L}_{\tau=1}\bm{A}_{\tau}z^{-\tau}\right)^{-1}\right]\bm{U}^{\top}h(\bm{\epsilon}_{t}).
    \end{align}

    Note, this result is consistent with that $h$ must be an affine transform, i.e., there is an orthogonal matrix making the estimated source be $h(\bm{z}_{t})=\bm{U}\bm{z}_{t}$ and the estimated transition matrix $\hat{\bm{A}}_{\tau} = \bm{A}_{\tau}\bm{U}^{\top}$.
    In other words, once the minimizers of contrastive loss $\mathcal{L}_{\rm contr}$ are obtained, the identifiability is guaranteed up to linear equivalence classes, and any solution within them is the best solution.
\end{proof}

Further, by chosing proper $\delta(\cdot,\cdot)$, the results extended from Banach-Lamperti Theorem \citeapp{lampertiIsometriesCertainFunctionspaces1958} indicate that we can achieve a more strong identifiability:
\begin{theorem}
    For non-isotropic norm induced metric $\ell_{\alpha}$, if $1\leq \alpha \leq \infty$ and $\alpha\neq 2$.
    A matrix $\bm{U}\in \mathbb{R}^{d\times d}$ is an isometry if and only if $\bm{U}$ is a generalized permutation matrix, there is only a channel permutation $\pi$ between estimations and ground-truth such that $\hat{z}_{i,t}=s_{i}z_{\pi(i),t}$, where $s_{i}$ is a scaling constant.
\end{theorem}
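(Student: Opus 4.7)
The plan is to establish the biconditional by independently proving both implications. The easy direction is $(\Leftarrow)$: if $\bm{U}$ is a generalized permutation matrix with $(\bm{U}\bm{x})_{i}=s_{i}x_{\pi(i)}$ and $|s_{i}|=1$, then $\|\bm{U}\bm{x}\|_{\alpha}^{\alpha}=\sum_{i}|s_{i}|^{\alpha}|x_{\pi(i)}|^{\alpha}=\|\bm{x}\|_{\alpha}^{\alpha}$, so $\bm{U}$ is an $\ell_{\alpha}$-isometry. The bulk of the proof concerns the nontrivial direction $(\Rightarrow)$, which I would handle by following a Banach--Lamperti-style argument specialized to $(\mathbb{R}^{d},\ell_{\alpha})$.

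Assuming $\bm{U}$ is a linear isometry, set $\bm{u}_{i}:=\bm{U}\bm{e}_{i}$. Isometry together with linearity yields, for every $\bm{c}\in\mathbb{R}^{d}$,
\[
\Big\|\sum_{i=1}^{d}c_{i}\bm{u}_{i}\Big\|_{\alpha}^{\alpha} \;=\; \sum_{i=1}^{d}|c_{i}|^{\alpha}.
\]
The goal is to show that each $\bm{u}_{i}$ is supported on a single coordinate, with entry of modulus $1$. I would reduce this to the following key lemma: for $1\leq\alpha<\infty$ with $\alpha\neq 2$, any two unit vectors $\bm{x},\bm{y}\in\ell_{\alpha}^{d}$ satisfying $\|a\bm{x}+b\bm{y}\|_{\alpha}^{\alpha}=|a|^{\alpha}+|b|^{\alpha}$ for every $a,b\in\mathbb{R}$ must have disjoint coordinate supports. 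Applied to every pair $(\bm{u}_{i},\bm{u}_{j})$ this forces pairwise disjoint supports; since the $d$ unit vectors collectively occupy at most $d$ coordinates, each $\bm{u}_{i}$ is concentrated on a single coordinate $\pi(i)$ with $|(\bm{u}_{i})_{\pi(i)}|=1$, and $\pi$ is a permutation, which is exactly the generalized-permutation form claimed.

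To prove the key lemma I would Taylor-expand the identity in $b$ at $b=0$. Writing $F(a,b):=\sum_{i}|ax_{i}+by_{i}|^{\alpha}$, for $\alpha>2$ the function is $C^{2}$ in $b$ and
\[
\partial_{b}^{2}F(a,0) \;=\; \alpha(\alpha-1)|a|^{\alpha-2}\sum_{i}|x_{i}|^{\alpha-2}y_{i}^{2},
\]
whereas $|a|^{\alpha}+|b|^{\alpha}$ has vanishing second derivative at $b=0$. Matching the two forces $x_{i}y_{i}=0$ at every coordinate with $x_{i}\neq 0$, i.e.\ disjoint supports. For $1<\alpha<2$ the same conclusion follows by a one-sided argument: the second derivative blows up precisely at coordinates where $x_{i}=0$ but $y_{i}\neq 0$, so finiteness of the left-hand side pins down $y_{i}=0$ whenever $x_{i}\neq 0$. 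The endpoint $\alpha=1$ is handled separately via the extreme-point structure of the unit ball: the cross-polytope has vertices $\pm\bm{e}_{i}$, which must be mapped to vertices by any linear isometry, yielding a signed permutation directly. The case $\alpha=\infty$ follows by duality, since the adjoint of an $\ell_{\infty}$-isometry is an $\ell_{1}$-isometry and hence a signed permutation, and the transpose of a signed permutation is again a signed permutation.

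The main obstacle I anticipate is obtaining a uniform proof of the key lemma across the full range $1\leq\alpha<2$, where second derivatives of $t\mapsto|t|^{\alpha}$ are not classically defined at $0$ and the differentiation trick above is delicate. My fallback is to replace the Taylor argument by the equality case of Clarkson's inequalities: for $\alpha\neq 2$, the identity $\|\bm{x}+\bm{y}\|_{\alpha}^{\alpha}+\|\bm{x}-\bm{y}\|_{\alpha}^{\alpha}=2(\|\bm{x}\|_{\alpha}^{\alpha}+\|\bm{y}\|_{\alpha}^{\alpha})$ is known to hold only for disjointly supported $\bm{x},\bm{y}$, and this characterization can be fed the rescaled pair $(a\bm{x},b\bm{y})$ to conclude uniformly. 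Once the lemma is in place, the rest is combinatorial bookkeeping, and the scaling magnitudes $|s_{i}|=1$ are forced by applying the isometry condition $\|\bm{U}\bm{e}_{i}\|_{\alpha}=1$ coordinate-wise, completing the characterization stated in Theorem~3.
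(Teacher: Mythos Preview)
Your argument is correct. The paper does not actually prove this theorem: its entire proof reads ``See [Li, 1994],'' i.e.\ it defers to an external reference on isometries of the $\ell_{p}$-norm. What you have written is precisely the classical Banach--Lamperti characterization of linear $\ell_{\alpha}$-isometries, specialized to $\mathbb{R}^{d}$, which is the content of that cited reference. So you have supplied the proof that the paper only points to.

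A couple of minor remarks. First, your handling of $1<\alpha<2$ via the second-derivative blow-up is the delicate spot you already flagged; the Clarkson equality-case fallback is the cleanest way to make it uniform, and it is standard. Second, the paper's phrasing ``scaling constant $s_{i}$'' is slightly loose: as you correctly derive, a genuine $\ell_{\alpha}$-isometry forces $|s_{i}|=1$ (signed permutation), and any residual scale freedom in the paper's setting comes from the temperature $\mu$ upstream, not from this theorem. Your statement of the $(\Leftarrow)$ direction with $|s_{i}|=1$ is the right one for the biconditional to hold as written.
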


\begin{proof}
    See \citeapp{liIsometriesLpnorm1994}.
\end{proof}

\subsection{Proof of Theorem \ref{theorem:mult-view}}

\begin{customtheorem}{\ref*{theorem:mult-view}}[Minimizers of the multi-view objective maintains the channel corespondency]
    Let $\mathcal{Z}^{v}\subseteq \mathbb{R}^{d_{v}},\mathcal{Z}=\cup\mathcal{Z}^{v}$ and $\cap \mathcal{Z}^{v}\neq \emptyset$. If $\bm{B}^{v}\in \mathfrak{B}_{d_{v}}$ be a doubly stochastic matrix, $r,f,\{\bm{B}^{v}\}$ are the minimizers of contrastive objective for $M\to +\infty$, then, we have $h=r\circ g$ is an isometry, and $\{\bm{B}^{v}\}$ can rearrange the components of each view source such that common components from each view source are aligned.
\end{customtheorem}

\begin{proof}
    Since the source $\bm{z}^{v}_{t}$ of each view is first recovered by a reverse function $r^{v}$ individually, there is a distinctive transformation $h^{v}$ between each estimated view-specific source $\hat{\bm{z}}^{v}_{t}$ and true $\bm{z}^{v}_{t}$.
    We show that if and only if the Eq.~(\ref{eq:opt-permutation}) is optimized, the $r,f,\{\bm{B}^{v}\}$ are minimizers of the contrastive objective.
    Before continuing, we first show the property of a doubly stochastic matrix $\bm{B}^{v}$.
    By Birkhoff–von Neumann theorem, a doubly stochastic matrix can be represented by
    \begin{equation}
        \bm{B}^{v}=\theta_{1}\bm{P}^{v}_{1}+\cdots+\theta_{k}\bm{P}^{v}_{k},
    \end{equation}
    where $\theta_{1},\dots,\theta_{k} \geq 0$ and $\sum^{k}_{i=1}\theta_{i}=1$, and permutation matrix $\bm{P}^{v}_{i}\in \{0,1\}^{d_{v}\times d_{v}}$.
    
    Therefore, $R(\bm{B}^{v}$) is non-negative, and the minimizers of
    \begin{equation*}
        \sum^{2}_{v=1}\mathbb{E}\left[\|(\bm{B}^{v}\bm{z}^{v}_{t})_{1:d_{c}}-\bm{c}^{*}_{t}\|^{2}\right]+\eta R(\bm{B}^{v}),
    \end{equation*}
    can only be obtained by enforcing both the first and second terms to be zero.
    The first term is achieved when $(\bm{B}^{v}\hat{\bm{z}}^{v}_{t})_{1:d_{c}} = \bm{c}^{*}_{t}$ for all views, and the second term is achieved when $\bm{B}^{v}$ converges to a permutation matrix such that $R(\bm{B}^{v})=-\sum_{i,j} B_{i,j}\log B_{i,j}=0$.
    
    Then, the common source estimated from different views differ in a permutation transformation, i.e., a channel rearrangement,
    \begin{equation}
        (\bm{P}^{1}\hat{\bm{z}}^{1}_{t})_{1:d_{c}} = (\bm{P}^{2}\hat{\bm{z}}^{2}_{t})_{1:d_{c}} = \bm{c}^{*}_{t}.
    \end{equation}
    If $\bm{c}^{*}_{t}$ was not the common source, there would be some components of private source entangled in $\bm{c}^{*}_{t}$ that are equivalent on every point.
    This would mean that these components are actually same on all points, which contradicts the assumption that $\bm{c}^{*}_{t}$ is not the common source.
    
    Further, this result also indicates that the common source and the private source are disentangled, thereby the merge operator that splits and re-concatenates the estimated sources to construct the complete source is guaranteed to be consistent with the optimal condition.
\end{proof}

By the Theorem~\ref{theorem:ce} $h$ is an affine transformation and the common source and the private sources are disentangled.
For a source that generates two views, there is an orthogonal matrix $\bm{U}$ such that:
\begin{align}
    \hat{\bm{z}}_{t} & = \bm{U}\bm{z}_{t} \\
    & = \begin{bmatrix}
        \bm{U}_{c} & \bm{0} & \bm{0} \\
        \bm{0} & \bm{U}_{1} & \bm{0} \\
        \bm{0} & \bm{0} & \bm{U}_{2}\\
        \end{bmatrix}
        \begin{bmatrix}
            \bm{c}_{t} \\
            (\bm{B}^{1}\bm{z}^{1}_{t})_{d_{c}+1:d_{1}} \\
            (\bm{B}^{2}\bm{z}^{2}_{t})_{d_{c}+1:d_{2}} \\
        \end{bmatrix},
\end{align}
where $\bm{U}_{c}\in \mathbb{R}^{d_{c}\times d_{c}}$, $\bm{U}_{1}\in \mathbb{R}^{d_{1}-d_{c}\times d_{1}-d_{c}}$ and $\bm{U}_{2}\in \mathbb{R}^{d_{2}-d_{c}\times d_{2}-d_{c}}$ are orthogonal matrices.
Moreover, when $\ell_{\alpha}$ is non-isotropic norm induced metric, the $\bm{U}_{c}$, $\bm{U}_{1}$ and $\bm{U}_{2}$ can also be narrowed down to the permutation matrices.

\section{Practical Implementations}
\label{app:experiments}

In this section, we describe the details of the practical implementation of more experiments.
All experiments are conducted on two workstations with 8 NVIDIA RTX 2080Ti GPUs.

\subsection{Handle More Views}
As mentioned, the MuLTI can be easily extended to more than 2 views.
Herein, we conduct experiments on \textbf{Multi-view VAR} dataset under different numbers of views $V\in\{1, 2, 3, 4\}$, we set the dimensions of ground-truth latent variables to $d=10$, time lags to $L=2$, noise distribution follow the Laplacian with $\lambda=0.05$.
Setting details for each scenario are as follows:
\begin{description}
    \item [$V=1$]: $d_{c}=d=10$
    \item [$V=2$]: $d_{c}=4,d_{1}=7,d_{2}=7$
    \item [$V=3$]: $d_{c}=4,d_{1}=6,d_{2}=6,d_{3}=6$
    \item [$V=4$]: $d_{c}=2,d_{1}=4,d_{2}=4,d_{3}=4,d_{4}=4$
\end{description}
\begin{figure}[ht]
    \centering
    \includegraphics[width=0.8\linewidth]{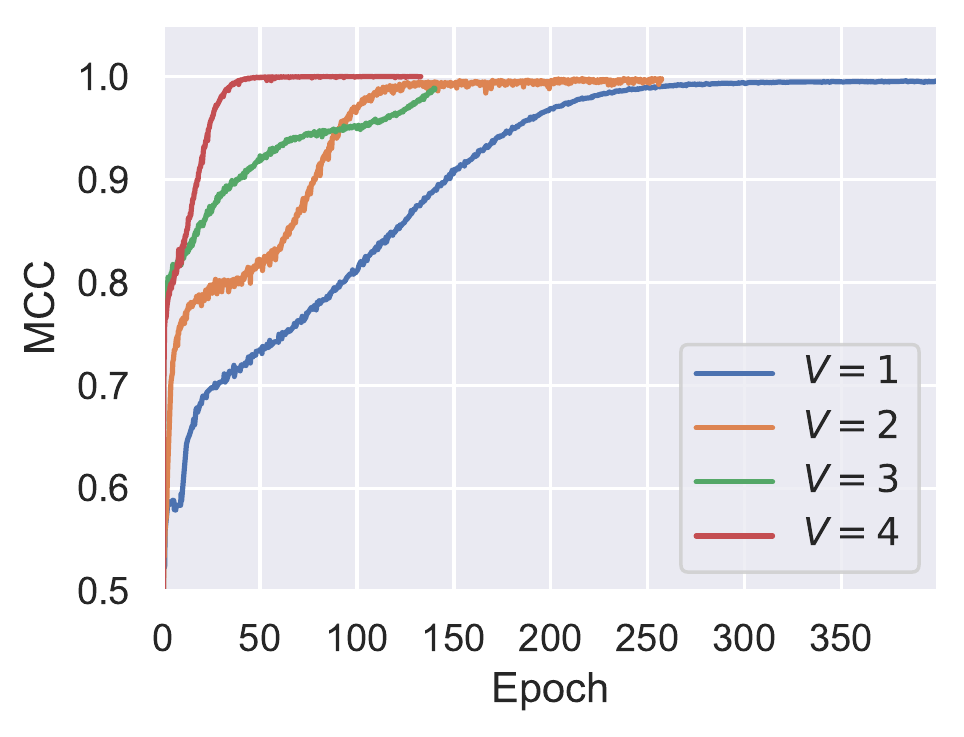}
    \caption{MCC trajectories with different numbers of views.}
    \label{fig:supp-more-views-mcc}
\end{figure}
In the case of $V=1$, no further merge operation is required; therefore, the permutation learning module is removed.
According to the results, MuLTI achieves full identifiability in all scenarios, thus MuLTI can handle datasets with more than two views successfully.

\subsection{Handle Higher Dimension}
We would like to add the results larger than the general setting, by setting $40\% $ dimensions to be shared, d=10,16,20,50.
The results are shown in Table~\ref{tab:dimension-ablation}.
\begin{table}[h]
\centering
\caption{Albation results with higher dimension}
\label{tab:dimension-ablation}
\begin{tabular}{@{}l|llll@{}}
\toprule
Source Dimension & $d=10$  & $d=16$  & $d=20$  & $d=50$  \\ \midrule
MCC              & 99.80 & 99.90 & 99.92 & 96.58 \\ \bottomrule
\end{tabular}%
\end{table}

\subsection{More Ablation Results}

\subsubsection{Effect of hyper-parameter}
The hyper-parameters for loss include $\beta_{1}$, $\beta_{2}$, and $\beta_{3}$, which are weights of each term in the augmented total objective.
While the effects of $\beta_{1},\beta_{2}$ are rather obvious than $\beta_{3}$, we perform a grid search evaluation strategies on $\beta_{1}\in\{0.0, 0.01, 0.02, 0.05, 0.1\},\beta_{2}\in\{0.0, 0.01, 0.02, 0.05, 0.1\}$.
The results are reported in Figure~\ref{fig:supp-beta-grid}.
The final MCC scores are typically around $99.0\%$, indicating that model performance is fairly robust to choices of $\beta_{1},\beta_{2}$.
But, when $\beta_{1}$ or $\beta_{2}$ is set to $0$, we observe the performance drops notably, which indicates the importance of our residual minimization and merge operations.
\begin{figure}
    \centering
    \includegraphics[width=0.9\linewidth]{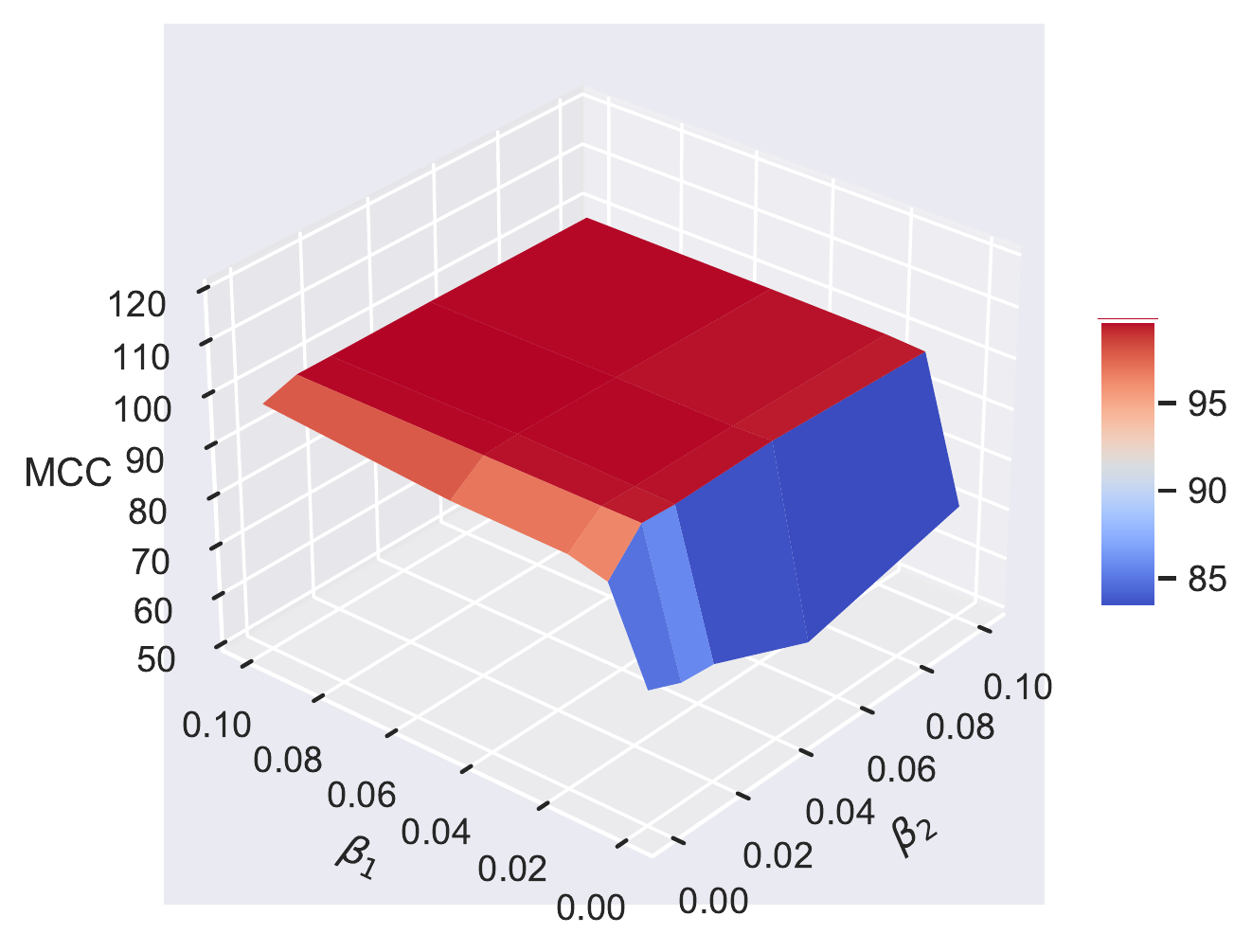}
    \caption{Results of grid hyper-parameter $\beta_{1},\beta_{2}$ evaluation.}
    \label{fig:supp-beta-grid}
\end{figure}

\subsubsection{Effect of different causal transition time lags}
We verify the robustness of our MuLTI under different time lags $L\in\{2, 4, 6\}$ with $\beta_{1}=0.01, \beta_{2}=0.01, \beta_{3}=1e-5$. The results are shown in Figure~\ref{fig:supp-more-lags-mcc}.
From the results, we can see that although the model takes more steps when the total time lags are $L=6$, the final result shows that our method can achieve full identifiability in these different experimental scenarios.

\begin{figure}
    \centering
    \includegraphics[width=0.8\linewidth]{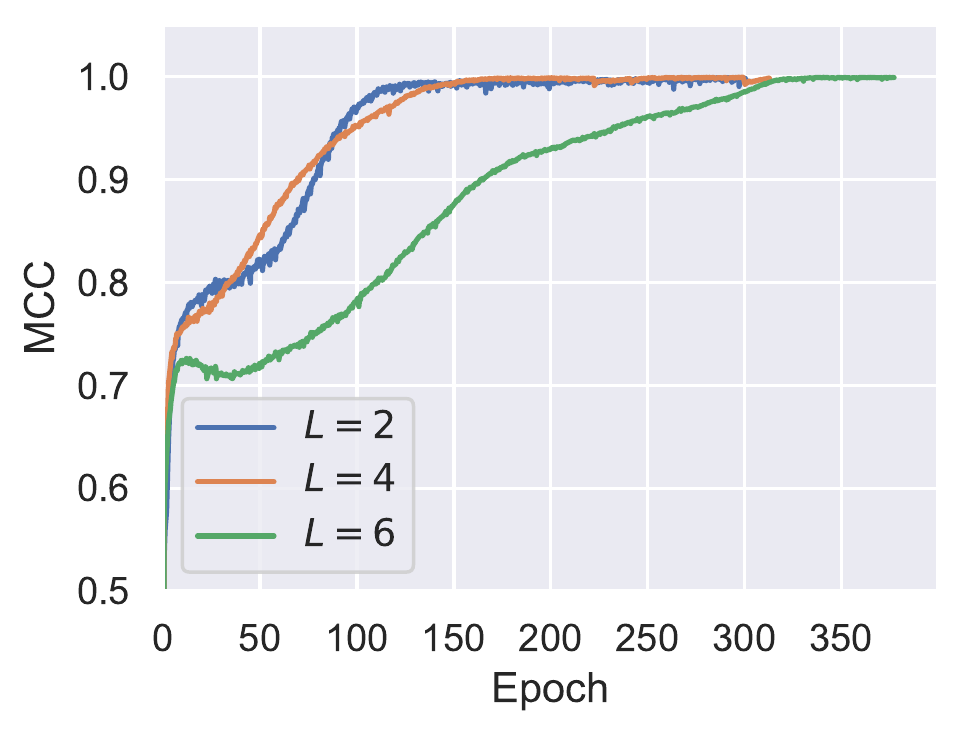}
    \caption{MCC trajectories with different time lags.}
    \label{fig:supp-more-lags-mcc}
\end{figure}

\subsection{Time Complexity}

It is true that the original OT algorithm suffers from a high time complexity problem, however, the Sinkhorn algorithm can be efficiently implemented on GPUs. We clock the experiments with different D sizes [10, 20, 50], and the total running time of Sinkhorn/complete training of 800 epochs are shown in Table~\ref{tab:time-complexity}. It can be shown that the Sinkhorn iterations take less than 1/10 time cost that regular model training.
\begin{table}[h]
\centering
\caption{Total running time (in hours) of the Sinkhorn-Knopp iterations and model training on GPUs.}
\label{tab:time-complexity}
\begin{tabular}{@{}l|lll@{}}
\toprule
Source Dimension & d=10 & d=20 & d=50 \\ \midrule
Sinkhorn-Knopp   & 0.30 & 0.69 & 0.89 \\
Model Training   & 3.31 & 6.71 & 8.81 \\ \bottomrule
\end{tabular}
\end{table}

\subsection{Details of Datasets Configuration}

\subsubsection{Synthetic VAR} is a synthetic dataset modified from \citeapp{yaoLearningTemporallyCausal2021}, which involves VAR processes and corresponding nonlinear multi-view observations.
To generate latent process, we first select transition matrices $\{A_{\tau}\}^{L}_{\tau=1}$ from uniform distribution $U[-0.5, 0.5]$, then sample initial states $\{z_{t}\}^{L}_{t=1}$ from normal distribution for generating sequences $\{z_{t}\}^{T}_{t=1}$.
To generate multi-view time series, we randomly select $d_{1}$ and $d_{2}$ dimensions of latent variables for each view.
The nonlinear observations of each view are created by mixing latent variables with random parameterized MLPs $g^{v}(\cdot)$ following previous work \citeapp{hyvarinenNonlinearICATemporally2017}.
Specifically, each $g^{v}(\cdot)$ is composed of three hidden fully connected layers and Leaky ReLU units.

\subsubsection{Mass-spring system} is a classical physical system that follows Hooke's law. In this dataset, eight balls are simulated inside a 2D box where they may collide elastically with its walls. A spring is connected to each pair of variables with a uniform probability.

\begin{equation*}
    f_{i j}=-\Delta_{k}\left(\bm{y}_{i}-\bm{y}_{j}\right), \quad \ddot{\bm{y}}_{i}=\sum_{j=1}^{N} f_{i j}, \quad \bm{p}_{i}=\left\{\bm{y}_{i}, \dot{\bm{y}}_{i}\right\}
\end{equation*}
The $f_{ij}$ represents the unidirectional interaction between a ball $j$ and a ball $i$, with $\Delta_{k}$ denoting the type of edge for each pair of variables, and $\bm{y}_{i}$ and $\dot{\bm{y}}_{i}$ denoting the ball's 2D position and velocity, respectively. The continuous variable $\bm{p}_{i}$ is constructed by concatenating the position and the velocity. A step-by-step integration of the previous differential equations is used to simulate the trajectory once the initial location and velocity have been sampled. Trajectories are obtained by subsampling every 50 steps at a step size of 0.001.
In practice, we create the scenario of simulation and generate the trajectories with 
\emph{Pymunk}, then render the trajectories into two views with \emph{Matplotlib}.
We set the rest length of the spring to $[5, 10]$, and we set the stiffness of the spring in relation to $20$.
Additionally, external forces are added to the ball at each time step, which are sampled from a Laplacian noise distribution with $\lambda=0.1$.

\subsection{Pseudo-Code of MuLTI}
We summarize the pseudo-code of our MuLTI method in Algorithm~\ref{alg:algorithm}.
\begin{algorithm}[ht]
\caption{Pesudo-code for MuLTI}
\label{alg:algorithm}
\textbf{Input}: Training dataset $\{\mathcal{X}^{v}\}$, encoders $\{r^{v}\}$, transition function $f$, matrices $\{\bm{B}^{v}\}$.\\
\textbf{Parameter}: $\beta_{1},\beta_{2},\beta_{3},\mu,\eta$\\
\textbf{Output}: Trained encoders $\{r^{v}\}$, transition functions $f$, permutation matrices $\bm{B}^{v}$.
\begin{algorithmic}[1] 
\STATE Let $t=0$;
\WHILE{Not converge}
\STATE Sample a mini-batch $\{\bm{x}^{v}\}$;
\STATE Obtain $\{\bm{z}^{v}_{t}=r^{v}(\bm{x}^{v}_{t})\}$;
\STATE Merge $\bm{z}_{t}=\bm{m}(\{\bm{z}^{v}_{t}\})$ from each $\{\bm{z}^{v}_{t}\}$;
\STATE Construct $\{(\mathcal{H}_{t},\bm{z}_{t})\}$ from $\{\bm{z}_{t}\}$;
\STATE Fetch causal transited variable $\tilde{\bm{z}}_{t}=f(\mathcal{H}_{t})$;
\STATE Construct contrastive pairs $(\bm{z}_{t},\tilde{\bm{z}}_{t}),(\bm{z}_{t},\bm{z}^{-}_{t})$;
\STATE Obtain $\{\hat{\epsilon}_{i,t}\}$ and $\{\hat{\epsilon}_{i,t}\}^{\text{perm}}=\text{randperm}(\{\hat{\epsilon}_{i,t}\})$;
\IF {Do contrastive learning}
\STATE Compute $\mathcal{L}_{\text{contr}}$ according to Eq.\eqref{eq:loss_contr};
\STATE $\mathcal{L}_{\epsilon}=\mathbb{E}_{\{(\bm{z}_{t},\tilde{\bm{z}}_{t})\}} \delta(\bm{z}_{t},\tilde{\bm{z}}_{t})$;
\STATE $\mathcal{L}_{m}=\sum_{v}\mathbb{E}[\|(\bm{B}^{v}\bm{z}^{v}_{t})_{1:D_{c}}-\bm{c}^{*}_{t}\|^{2}]$;
\STATE $\mathcal{L}_{\mathcal{D}}=\log \mathcal{D}(\{\hat{\epsilon}_{i,t}\}) - \log (1-\mathcal{D}(\{\hat{\epsilon}_{i,t}\}^{\text{perm}}))$;
\STATE $\mathcal{L}=\mathcal{L}_{\text{contr}}+\beta_{1}\mathcal{L}_{m}+\beta_{2}\mathcal{L}_{\epsilon}+\beta_{3}\mathcal{L}_{\mathcal{D}}$;
\STATE Minimize loss $\mathcal{L}$;
\ELSIF {Do permutation learning}
\STATE $\mathcal{L}_{m}=\sum_{v}\mathbb{E}[\|(\bm{B}^{v}\bm{z}^{v}_{t})_{1:D_{c}}-\bm{c}^{*}_{t}\|^{2}]+\eta R(\bm{B}^{v})$;
\STATE Minimize loss $\mathcal{L}_{m}$;
\ELSIF {Do discriminator learning}
\STATE $\mathcal{L}_{\mathcal{D}}'=\log \mathcal{D}(\{\hat{\epsilon}_{i,t}\}) + \log (1-\mathcal{D}(\{\hat{\epsilon}_{i,t}\}^{\text{perm}}))$;
\STATE Minimize loss $\mathcal{L}_{\mathcal{D}}'$;
\ENDIF
\ENDWHILE
\STATE \textbf{return} $\{r^{v}\},\{\bm{B}^{v}\},f,\{\bm{z}_{t}\}$.
\end{algorithmic}
\end{algorithm}

\newpage
\bibliographystyleapp{named}
\bibliographyapp{ijcai23}

\end{document}